\newcolumntype{C}[1]{>{\centering}m{#1}}
\definecolor{Gray}{gray}{0.9}
\newcommand{\cmark}{{\color{black!50!green}\bfseries\ding{51}}}%
\newcommand{\xmark}{{\color{red}\bfseries\ding{55}}}%
\newcommand{\mysmall}[1]{
\begin{center}
{\small #1}
\end{center}
}
\DeclareMathOperator*{\argmax}{arg\,max}
\DeclareMathOperator*{\argmin}{arg\,min}
\newcommand{\E}{\mathbf{E}}
\newcommand{\Id}{\mathrm{I}}
\def\R{{\mathbf R}}
\def\calX{{\mathcal X}}
\def\calY{{\mathcal Y}}
\def\calZ{{\mathcal Z}}
\def\OTc{\mathrm{OT}_c}
\newlength{\flen}
\def\LL{N_F} 
\newcommand{\vv}{u_0} 
\newcommand{\FF}{\mathcal{J}}
\newcommand{\GM}{Gr}
\newcommand{\mixed}{mix}
\newtheorem{hyp}{Hypothesis}
\newtheorem{rem}{Remark}
\newtheorem{theorem}{Remark}
\title{A Generative Model for Texture Synthesis based on Optimal Transport between Feature Distributions\thanks{This study has been carried out with financial support from the French Research Agency through the GOTMI project (ANR-16-CE33-0010-01). The authors also acknowledge the French GdR ISIS through the support of the REMOGA project.}}
\date{}
\author{Antoine Houdard\\
	Université de Bordeaux\\
	Bordeaux INP, CNRS, IMB, UMR 5251\\
	33400 Talence, France\\
	\texttt{antoine.houdard@ubisoft.com}\thanks{Current affiliation: Ubisoft La Forge, 33000 Bordeaux}\\	
	\And 	
	Arthur Leclaire\\
	Université de Bordeaux\\
	Bordeaux INP, CNRS, IMB, UMR 5251\\
	33400 Talence, France\\
	\texttt{arthur.leclaire@u-bordeaux.fr}\\	
	\And	
	Nicolas Papadakis\\
	Université de Bordeaux\\
	Bordeaux INP, CNRS, IMB, UMR 5251\\
	33400 Talence, France\\
	\texttt{nicolas.papadakis@u-bordeaux.fr}\\	
	\And	
	Julien Rabin\\
	Université de Normandie\\
	UniCaen,  ENSICAEN, CNRS, GREYC, UMR 6072\\
	14000 Caen, France\\
	\texttt{julien.rabin@unicaen.fr}\\}
\begin{document}

\maketitle

\begin{abstract}
We propose GOTEX, a general framework for texture synthesis by optimization that constrains the statistical distribution of local features. While our model encompasses several existing texture models,
we focus on the case where the comparison between feature distributions  
relies on optimal transport distances.
We show that the semi-dual formulation of optimal transport allows to control the distribution of various possible features, even if these features live in a high-dimensional space.
We then study the resulting minimax optimization problem, which corresponds to a Wasserstein generative model, for which the inner concave maximization problem can be solved with standard stochastic gradient methods.
The alternate optimization algorithm is shown to be versatile in terms of applications, features and architecture; in particular it allows to produce high-quality synthesized textures with different sets of features.
We analyze the results obtained by constraining the distribution of patches or the distribution of responses to a pre-learned VGG neural network. We show that the patch representation can retrieve the desired textural aspect in a more precise manner. 
We also provide a detailed comparison with state-of-the-art texture synthesis methods.
The GOTEX model based on patch features is also  adapted to texture  inpainting and texture interpolation.
Finally, we show how to use our framework to learn a feed-forward neural network that can synthesize on-the-fly new textures of arbitrary size in a very fast manner. Experimental results and comparisons with the mainstream methods from the literature illustrate the relevance of  the generative models learned with  GOTEX. 
\end{abstract}

\keywords{Optimal Transport \and Generative model \and Texture Synthesis}

%
\section{Introduction}
%

A lot of attention has been recently drawn on the problem of designing deep generative models from an image database~\cite{goodfellow2014gan,arjovsky2017wgan,karras2019style}.
In contrast, synthesizing a texture from a single sample is a long-standing image processing problem for which many solutions have been proposed, as we will recall below.
The main purpose of this work is to discuss whether the methodology developed for deep generative models can adapt to the case of learning from a single texture sample, depending on the choice of textural features that one wishes to preserve.
We will restrict to the relatively simple case of stationary textures (i.e. with no large geometric deformations nor lighting changes) which already benefits from powerful tools for analysis and synthesis.

\subsection{Features for texture synthesis}
In the stationary setting, the common point of view adopted in parametric texture models is to represent the textural aspect through the statistics of local features extracted from the neighborhoods of all pixels. 
Parametric texture models thus encompass the Gaussian model~\cite{galerne_rpn_2011} (based on mean and covariances of pixel values), the Heeger-Bergen model~\cite{heegerbergen_1995} (based on first-order distributions of responses to a filter bank) and the Portilla-Simoncelli approach~\cite{portilla_simoncelli_2000} (based on second-order statistics computed on complex wavelet filter responses).

More recently, features extracted with a deep convolutional neural network have permitted to accurately solve difficult imaging problems, with tremendous success in image classification~\cite{krizhevsky2012imagenet,VGG} or texture synthesis~\cite{gatys_texture_2015} for example.
Such {\em deep features} are nevertheless complex to understand and to interpret. This makes difficult the prediction and the tuning of the results provided by methods based on deep features.
An illustration of this major caveat is that, among all existing representation learning techniques, the only pre-trained neural features that are used in practice for texture synthesis (e.g. in~\cite{johnson2016perceptual,ulyanov2016texture}) are solely based on the VGG network trained on ImageNet~\cite{VGG}, as proposed in the seminal work of~\cite{gatys_texture_2015}. 
As shown in our experiments of section~\ref{sec:generator}, Adversarial-based techniques are not competitive when training on a single image.
Additionally, those features require GPUs with large memory to be computed efficiently.
A question that naturally arises is then: do we actually \emph{need} deep features to encode a texture?

Deep features are computed from the image on \emph{patches}, which are small regions of size $s\times s$ around each pixel, also called the local receptive field of the feature. 
Patches of pixels are the simplest local feature that can be considered in this setting. 
Such a patch representation has originally be proposed to design texture synthesis methods based on simple iterative copy/paste operations or nearest-neighbor assignments~\cite{efros-nonparam-sampling,Kwatra}.
The patch representation has also been widely exploited for other purposes. 
It is indeed at the core of efficient image restoration methods~\cite{buades2005nlmeans,lebrun2013nlbayes,houdard2018high}.
Recently, it has also been shown to be powerful in comparison to representation learning techniques~\cite{thiry2021unreasonable}.

\paragraph{Current limitations}
Patch-based approaches generally suffer from three main practical limitations. First, the patches are often processed independently and then combined to form a recomposed image~\cite{galerne2018texture,leclaire_multilayer_ssvm19}. The overlap between patches leads to low frequency artifacts such as blurring. Second, the optimization has to be performed sequentially in a coarse-to-fine manner (both in image resolution and patch size) starting from a good initial guess. Last, global patch statistics must be controlled along the optimization to prevent strong visual artifacts~\cite{Gutierrez_ssvm2017,kaspar2015self}.

In the deep neural network community, deep feature representations have overtaken the patch representation in most of recent texture synthesis methods. Patches may indeed be considered to be less informative than deep features. Popular texture synthesis methods such as~\cite{gatys_texture_2015}, which enforces the Gram matrices of deep features from the synthesized texture, do not provide meaningful results if deep features are replaced by patches.
Nevertheless, the use of deep features leads to visual artifacts such as color inconsistencies or checkerboard patterns on the generated texture. Post-processing steps such as histogram equalization as in~\cite{gatys_texture_2015} or the application of median filter are necessary to provide relevant synthesis~\cite{durand2021shallow}.

\subsection{Optimal transport for texture synthesis}
Should we be working with patches or deep features, one common difficulty is to design tools that allow to compare the distribution of patches or feature responses (which both live in a high-dimensional space and have a strongly non-Gaussian behavior). 

In this work, we propose to compare these distributions with an optimal transport (OT) cost. Contrary to  divergences (e.g. Kullback-Leibler), the OT distance is a relevant tool for comparing distributions that have disjoint supports.  It is also  adapted for matching both discrete and continuous distributions. As we now detail, the use of OT cost for texture synthesis has already proven to be fruitful in the literature. 

For example, the authors of~\cite{tartavel2016wasserstein} suggest to rely on discrete Wasserstein distances in order to measure the proximity of distributions of extracted features (thus reinterpreting the Heeger-Bergen algorithm~\cite{heegerbergen_1995} as an alternate gradient descent on a composite Wasserstein cost).
In~\cite{rabin2011wasserstein} a sliced Wasserstein distance is used on distributions of local features (responses to a steerable pyramid) in order to compute texture barycenters.
Notice also that such a sliced Wasserstein distance was used in~\cite{heitz2021sliced} to compare distributions of deep features, in order to address texture synthesis.
Wasserstein distances can also be used to compare patch distributions, either with a discrete formulation~\cite{Gutierrez_ssvm2017} or a semi-discrete one~\cite{galerne2018texture,leclaire_multilayer_ssvm19}.
In~\cite{vacher2020texture}, the authors proposed to extract the means and covariances of the feature responses and then to rely on the Wasserstein distance between the corresponding Gaussian distributions. 
This method exploits the closed-form formula of the Wasserstein distance between Gaussian distributions, which can be efficiently computed in any dimension (as already used in~\cite{xia2014synthesizing} for dynamic textures). 
This can be seen as an extension of the model from~\cite{gatys_texture_2015} that compares feature distributions by exploiting the Frobenius norm between Gram matrices of features. However, the approach in~\cite{vacher2020texture} cannot handle non-Gaussian behavior of the feature responses. In the following, we will instead consider the general optimal transport case with no assumption on the feature distribution.

\paragraph{Current limitations of Wasserstein Generative models}
In parallel, the use of Wasserstein distances has helped to improve models based on adversarial training.
As introduced in~\cite{goodfellow2014gan} for image synthesis from a database, a generative adversarial network (GAN) is inherently trained to fool another neural network that is simultaneously optimized to discriminate between real images and synthetic images. 
Such an adversarial training can be formulated as the minimization of a discrepancy between distributions, namely the Jensen-Shannon divergence in the original work~\cite{goodfellow2014gan} or the $1$-Wasserstein distance in the paper~\cite{arjovsky2017wgan} introducing Wasserstein generative adversarial networks (WGAN).
Both these works rely on a dual formulation of the chosen discrepancy and suggest to parameterize the corresponding dual variable by a neural network.
Thanks to the properties of the Wasserstein distance,
WGAN has offered an elegant solution to mode collapse issues related to GANs.
Alternative techniques to train generative neural networks also took profit from using the OT framework. 
For instance, the Sliced-Wasserstein distance has been considered in the latent space of auto-encoders in~\cite{kolouri2018sliced}.

Building on these ideas,  adversarial models have been proposed for texture synthesis from a single example~\cite{bergmann2017learning} or for feed-forward synthesis of general images~\cite{shaham2019singan}.
Although achieving convincing performance on synthesis problems, the main limitation of GAN or WGAN is that they require to optimize a discriminative network, which makes the process unstable and requires a large number of additional parameters~\cite{goodfellow2014gan,mescheder2018training}. 
In the case of WGAN, the discriminative network is theoretically related to the dual formulation of the Wasserstein-1 distance, and thus should represent a 1-Lipschitz mapping. 
Different strategies have thus been proposed to enforce such a constraint (\emph{e.g.} weight clipping or gradient penalty~\cite{gulrajani2017improved}), thus only approximating the true Wasserstein-1 distance. 
Another strategy adopted by the authors of~\cite{seguy2018large} is to rely on regularized optimal transport, which leads to an unconstrained dual problem.
However, this new dual problem involves two dual variables that must be parameterized by two different neural networks, which leads to a non-convex problem with twice more variables.
In contrast, in~\cite{chen2019gradual}, the optimization of the Wasserstein distance in WGAN is driven by the semi-discrete formulation of OT between the discrete distribution of training images and the density of generated images.
This has the benefit of keeping a convex formulation for the OT dual problem which stabilizes training~\cite{Houdard_ssvm21}, while not being specific to the $L^1$ cost.
In the following we will adopt the same approach than~\cite{chen2019gradual,Houdard_ssvm21} to approximate the solution of OT,
but we will include it in a more general framework able to learn a generative network. 
In addition, the OT distances will be used not to compare distributions of generated images but rather to constrain the feature distribution of synthesized images.

\begin{figure}[t]
\centering
\includegraphics[width=\linewidth]{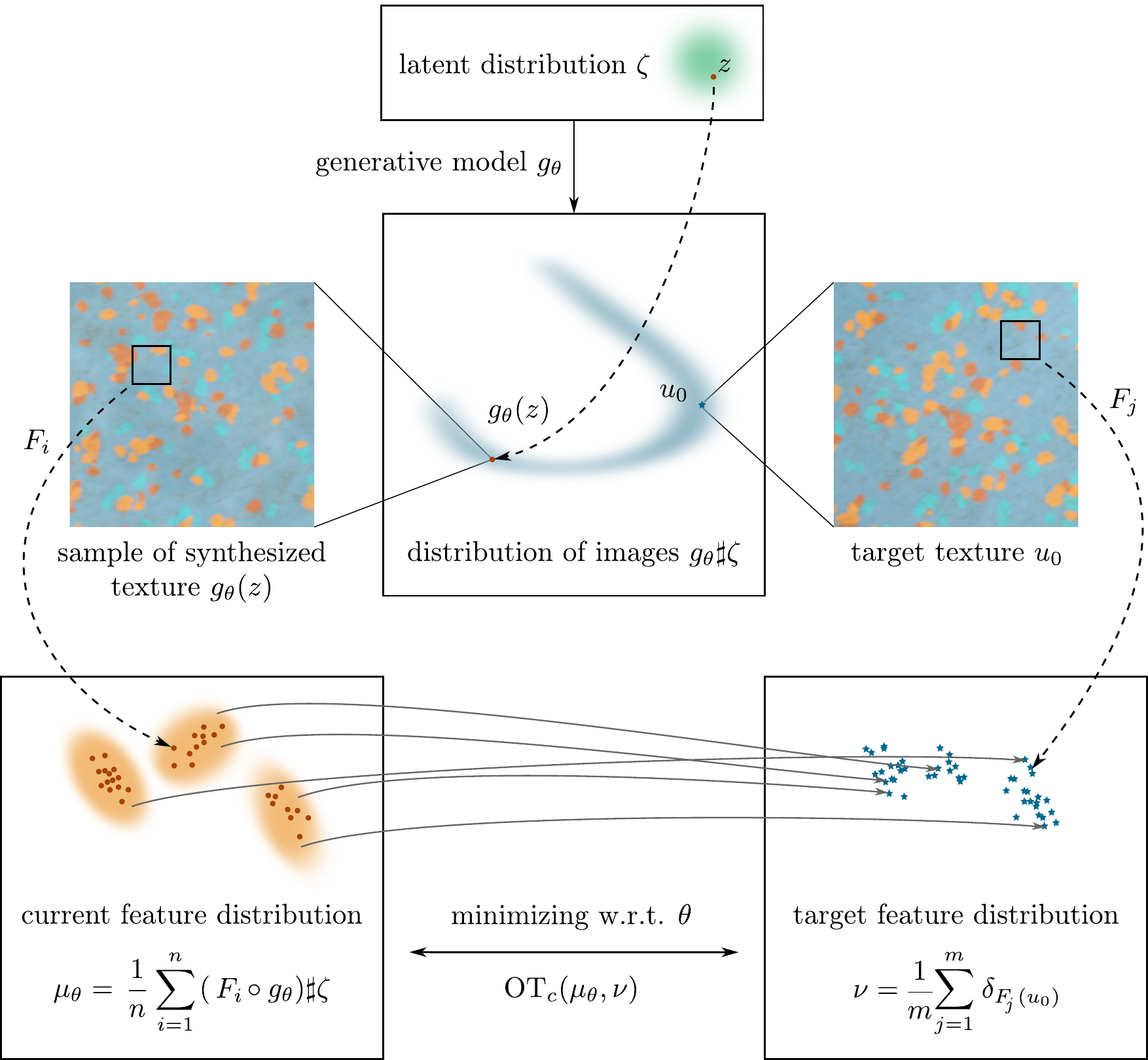}
\caption{Summary of the proposed GOTEX framework. A texture formation model is encoded with a generative model $g_\theta$ and the distribution of texture images is represented through its feature distribution $\mu_\theta$. The objective is then to minimize the optimal transport cost $\OTc(\mu_\theta,\nu)$ between the current feature distributions $\mu_\theta$ and the (discrete) feature distribution $\nu$ of the example target texture $u_0$. This framework also encompasses the case where the optimization is done on the image pixels by taking the latent distribution as a Dirac (see section \ref{sec:image-optim} for details).
}\label{fig:accroche}
\end{figure}

\begin{table}[htb]
\centering
\caption{Technical comparison of previous work based on the following criteria:
Fast synthesis based on a \emph{Feed-Forward} architecture trained offline;
Optimal Transport (\emph{OT}) based objective function;
\emph{Patch}-based representation;
\emph{Deep} features for optimization, 
where * indicates that such features are simultaneously learned during training using an adversarial loss, rather than defined from a pre-trained neural network.}
\label{tab:classification_previouswork}
\begin{tabular}{r||cccc}
Method & Feed-Forward & OT & Patch & Deep Features \\
Gram-VGG~\cite{gatys_texture_2015} & \xmark & \xmark & \xmark & \cmark\\
SINGAN~\cite{shaham2019singan} & \cmark & \xmark & \cmark & \cmark*\\
PSGAN~\cite{bergmann2017learning}& \cmark & \xmark & \cmark & \cmark*\\
Texture Networks~\cite{ulyanov2016texture} & \cmark & \xmark & \xmark & \cmark\\
TexOptim~\cite{Kwatra} & \xmark & \xmark & \cmark & \xmark \\
OPA~\cite{Gutierrez_ssvm2017} & \xmark & \cmark & \cmark & \xmark\\
TexTo~\cite{leclaire_multilayer_ssvm19} & \cmark & \cmark & \cmark & \xmark \\
GOTEX & \cmark & \cmark & \cmark & \cmark \\
\end{tabular}
\end{table}

\subsection{Contributions and outline} 

As summed up in Table~\ref{tab:classification_previouswork}, state-of-the-art texture synthesis methods are either based on patches or deep features. 
In this work, we propose a unified framework in order to address the limitations associated with both kinds of features with a formulation inspired by generative networks.

For this purpose, we introduce a Generative model based on Optimal transport for synthesizing TEXtures (GOTEX) while prescribing their feature distributions. 
The idea is to define a texture formation model as the push-forward of a latent distribution $\zeta$ by a measurable function $g_\theta$ and to consider its underlying feature distribution. 
Then the parameter $\theta$ of the model is optimized to enforce the feature distribution at different scales to be close to the one of the exemplar image, in the sense of optimal transport. The proposed pipeline is illustrated in Fig.~\ref{fig:accroche}. 
The organization of the paper and the description of its main contributions are listed below. 

In section~\ref{sec:texturemodel} we introduce the GOTEX framework that enforces the feature distribution of generated textures and treats in the same way texture synthesis by pixel-wise optimization (section \ref{ssec:image-optim}) and by learning a generative model (section \ref{ssec:gen_models}). 
Both problems will respectively rely on the discrete and semi-discrete formulations of the optimal transport cost.
In section~\ref{sec:theo} we state a differentiation result that gives a formula for the gradient of the optimal transport cost between feature distributions with respect to the parameter $\theta$ (see theorem~\ref{thm:2}).

In section~\ref{sec:feat} we present the GOTEX algorithm and detail the versatility of the framework, which can combine different distributions of features in the texture synthesis model. 
Our approach namely encompasses multi-scale procedures on patches or VGG-19 features. 
Contrary to previous methods relying on approximations of OT that are detailed in section \ref{sec:previous_work_OT}, our framework allows for a more accurate optimization. The proposed model also offers a theoretical-sound framework to compute barycenters of texture models, thus providing a relevant way to synthesize interpolated textures.

In section \ref{sec:image-optim}, we focus on the image optimization setting where an optimization problem is solved for each new synthesis. This involves  discrete optimal transport problems that can be efficiently solved with dedicated nearest-neighbor search libraries. We then propose an extensive analysis of the model including the  comparison of different features and losses and the comparison of different numerical methods approximating the optimal transport. In addition to texture synthesis, we also generalize the model in order to address texture inpainting and interpolation.

In section \ref{sec:generator}, we finally demonstrate that the GOTEX framework is well suited  to the training of a deep generative feed-forward convolutional neural network, as proposed in~\cite{ulyanov2016texture,bergmann2017learning,shaham2019singan} for texture generation.

%
\section{GOTEX: a  Generative model based on Optimal transport for synthesizing textures}
\label{sec:texturemodel}
%
In this section, we present a generic framework formulating texture synthesis as the minimization of a loss function that reflects the proximity of a set of features of the synthesized image(s) to the ones of the example.
As we will see, different choices of the loss functions can model the statistical behavior of the features in a parametric or non-parametric way.
As we will see,  when considering the feature distributions, the loss function can be expressed using optimal transport distances. %

\subsection{Texture synthesis by minimizing a distance between feature sets}\label{ssec:image-optim}
We first consider the synthesis of a single image $u \in \R^n$ with $n$ pixels.
For each pixel~$i$ of the image, we consider a measurable map $F_i : \R^n \to \R^d$ that extracts a local feature of dimension $d$ computed from the neighborhood of pixel $i$.
For example, $F_i(u)$ may be a square patch of dimension $d = 3 \times s \times s$, or a collection of $d$ neural responses computed at pixel $i$.
The features of the image $u$ will be gathered in a vector $F(u) = (F_i(u))_{1 \leq i \leq n} \in \R^{dn}$.

We also consider a cost function $\Lambda : \R^{dn} \times \R^{dn} \to \R_+$ that is chosen to assess the proximity between feature maps.
Then we can define a loss function between two textures $u\in\R^n$ and $\vv\in\R^n$ as
\begin{equation}
\label{eq:loss}
\mathcal{L}(u,\vv) = \Lambda\left( F(u),F(\vv)\right).
\end{equation}

A new sample $u$ of a given example texture $\vv$ may be obtained by minimizing~\eqref{eq:loss} with respect to $u$. 
If we assume that $\Lambda$ is differentiable and that for all $i$, $F_i$ is differentiable with respect to the image $u\in\R^n$, the loss function \eqref{eq:loss} may be minimized by  performing a gradient descent with respect to the pixels of the image~$u$. 
Due to the potential non linearity of the feature extraction operators $F_i$ and/or of the loss function~$\Lambda$ such a problem is typically non-convex.
However, starting from a random initialization, gradient descent schemes can converge to local minima that will correspond to plausible syntheses of the exemplar texture.

\begin{rem}The seminal work of~\cite{gatys_texture_2015} is included in this framework. More precisely, it corresponds 
to take as features $F_i$(u) the normalized outputs of a pre-trained VGG network~\cite{VGG} at different layers $l$, 
to define the Gram matrix of the features
\begin{equation}
\GM(F(u)) = \frac{1}{n}\sum_{i=1}^n F_i(u)F_i^{\top}(u) \in \R^{d \times d},
\end{equation}
and to compare Gram matrices with the squared Frobenius norm $\|.\|_F^2$.
A synthesized image is then obtained in~\cite{gatys_texture_2015} by minimizing with respect to $u$ the quantity
\begin{equation}\label{eq:gram_loss}
\mathcal{L}_{Gram}(u,\vv) = \sum_l \|\GM(\textrm{VGG}_l (u)) - \GM(\textrm{VGG}_l(\vv))\|_F^2.
\end{equation}
\end{rem}

\subsection{Generative models}\label{ssec:gen_models}
The previous model \eqref{eq:loss} requires to perform an optimization each time a new texture $u$ is synthesized. Hence, the authors of~\cite{ulyanov2016texture} have later proposed to first train a generative model and then realize new syntheses on-the-fly. The optimization is realized on the parameters of a feed-forward network rather than on the image pixels.

To that end, we assume that different samples of a given texture are actually samples of a probability distribution. This distribution is defined as the push-forward of a given random distribution $\zeta$  defined on $\calZ$ (e.g. a uniform distribution $\zeta$ on a latent space $\calZ=[0,1]^M$ of dimension $M$), with a generator $g$ to estimate.
Let us consider a measurable generative function $g :\Theta\times\calZ\to\R^n$ where $\Theta$ is a set of parameters. For a given parameter $\theta\in\Theta$ we write $g_\theta = g(\theta, \cdot)$ and we consider the output texture distribution as the push-forward $g_\theta\sharp\zeta$, which is given by $g_\theta\sharp\zeta(B) = \zeta(g_\theta^{-1}(B))$ for any Borel set $B$. A relevant generative model may thus be learned by minimizing with respect to the parameters $\theta$ the following objective function:
\begin{equation}\label{eq:loss_gen}
\mathcal{L}_{gen}(\theta,\vv) = \E_{Z\sim\zeta} \left[ \Lambda \left( F(g_\theta(Z)), F(\vv)\right) \right].
\end{equation}
When considering such generative models, we face a {\em semi-discrete problem} as the data at hand $\vv$ is discrete while the generated distribution $g_\theta\sharp\zeta$ is expected to be absolutely continuous.
In the following, we present a general framework that includes both models \eqref{eq:loss} and \eqref{eq:loss_gen} based on the probabilistic distributions of features.

\subsection{Probabilistic representation of the features}
We now define the generic texture formation model that permits to  encompass the optimization on the image pixels \eqref{eq:loss} and the  optimization on the weights of a generative model \eqref{eq:loss_gen}.
To that end, we propose to consider the probability distribution of the features $F_i$ taken on the texture distribution $g_\theta\sharp\zeta$.
Assuming that all $F_i$ are  measurable, we have for each $i$ a local feature distribution given by
\begin{equation}
\mu_\theta^i = F_i\sharp\left(g_\theta\sharp\zeta\right) = \left(F_i\circ g_\theta\right)\sharp\zeta.
\end{equation}
Then the whole feature distribution of the generative model writes
\begin{equation}\label{eq:def_mu}
\mu_\theta = \frac{1}{n}\sum_{i=1}^n  \mu_\theta^i = \frac{1}{n}\sum_{i=1}^n \left(F_i\circ g_\theta\right)\sharp\zeta.
\end{equation}

\begin{rem}\label{rem:si}
This formulation includes the case of the single image synthesis corresponding to the minimization with respect to image pixels \eqref{eq:loss}. If $\theta$ denotes the image to optimize, then taking  $\zeta = \delta_0$ and $g(\theta, z) = \theta - z$ gives $g_\theta\sharp\zeta = \delta_\theta$. Since we have $F_i\sharp\delta_\theta = \delta_{F_i(\theta)}$, the underlying feature distribution of the image $\theta$ writes as the discrete probability distribution
\begin{equation}
\mu_\theta = \frac{1}{n}\sum_{i=1}^n  \delta_{F_i(\theta)}.
\end{equation}
\end{rem}
To perform texture synthesis, we aim at minimizing with respect to $\theta$ a distance function between $\mu_\theta$ and the distribution $\nu$ of features extracted from a target image $\vv$. In this setting, a natural tool for comparing probability distributions appears to be optimal transport. We now describe our  formulation based on an optimal transport distance. 

\subsection{Optimal Transport cost for comparing feature distributions}
Given an example texture $\vv$, we follow the previous section and denote as $\nu$ its feature distribution:
\begin{equation}
\nu = \frac{1}{m}\sum_{j=1}^m \delta_{F_j(\vv)}. 
\end{equation}
Our objective is  to constrain the feature distribution $\mu_\theta$ of the synthesized textures defined in \eqref{eq:def_mu} in order to match the target distribution $\nu$. To do so, in the GOTEX framework, the loss function forces the patch distribution $\mu_\theta$ of the synthesized textures to be close to the empirical example patch distribution $\nu$ for the optimal transport cost

\begin{equation}
\mathcal{L}_\text{GOTEX}(\theta,\vv) = \mathrm{OT}_c(\mu_\theta,\nu) = \inf_{\pi\in\Pi(\mu_\theta,\nu)} \int c(x,y)d\pi(x,y), \label{eq:OTcost}
\end{equation}
where $c : \R^d \times \R^d \to \R$ is a Lipschitz cost function (between features) and $\Pi(\mu_\theta,\nu)$ is the set of probability distributions on $\R^d \times \R^d$ having marginals $\mu_\theta$ and $\nu$. 
When using $c(x,y) = \|x-y\|^2$, as done for experiments in this paper, $\OTc$ corresponds to the square of the Wasserstein-2 distance.

Minimizing the optimal transport cost in Equation~\eqref{eq:OTcost} with respect to one of its argument is a difficult task in general. The situation is even harder in our case as we wish to differentiate~\eqref{eq:OTcost} with respect to $\theta$ and we have to deal with a nonlinear mapping $\theta\mapsto g_\theta$. 
The dual formulation of OT will allow us to separate the problems of approximating the OT distance and minimizing w.r.t $\theta$ as in~\cite{arjovsky2017wgan}. 
In this work, we will also exploit the discrete nature of the target distribution $\nu$ to rely on flexible algorithms for semi-discrete optimal transport. Before going into such technical details in the next section, we present below in \eqref{eq:minimax} the final problem we will optimize. 

Texture synthesis is obtained with the minimization of the optimal transport cost~\eqref{eq:OTcost} with respect to $\mu_\theta$, its first argument. Hence we consider the  semi-dual formulation of the optimal transport cost.

\begin{theorem}[Semi-dual formulation~\cite{santambrogio2015ot}] \label{th:ot_semidual}
If $\calX$ and $\calY$ are compact and the cost $c$ is continuous, then
  \begin{equation}
    \label{eq:semidualpsi}
    \mathrm{OT}_c(\mu,\nu) = \max_{\psi \in \mathcal{C}(\mathcal{Y})} \int \psi^{c}(x) d \mu(x) + \int \psi(y) d\nu(y),
  \end{equation}
where $\psi:\calY \to \R$ and its $c$-transform is defined by 
\begin{equation}
\psi^{c}(x) = \min_{y \in \calY} \left[ c(x,y) - \psi(y)\right].
\end{equation}
\end{theorem}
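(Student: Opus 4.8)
The plan is to obtain the semi-dual identity~\eqref{eq:semidualpsi} from the classical Kantorovich duality theorem and then collapse the pair of dual potentials into a single one using the $c$-transform. First I would invoke Kantorovich duality (see~\cite{santambrogio2015ot}), which under the present hypotheses ($\calX,\calY$ compact, $c$ continuous) asserts that there is no duality gap and that
\[
  \mathrm{OT}_c(\mu,\nu) = \sup\left\{ \int \phi\,d\mu + \int \psi\,d\nu \ :\ \phi\in\mathcal{C}(\calX),\ \psi\in\mathcal{C}(\calY),\ \phi(x)+\psi(y)\le c(x,y)\ \forall (x,y) \right\}.
\]
This strong-duality statement is the genuinely hard analytic ingredient; I would cite it rather than reprove it, since it rests on a Fenchel--Rockafellar / minimax argument in which compactness of $\calX\times\calY$ and continuity of $c$ are exactly what ensure tightness of $\Pi(\mu,\nu)$ and the closedness needed to rule out a gap.

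One inequality is then immediate. For any $\ph\in\mathcal{C}(\calY)$, the definition $\ph^{c}(x)=\min_{y}[c(x,y)-\ph(y)]$ gives $\ph^{c}(x)+\ph(y)\le c(x,y)$ for all $(x,y)$, and $\ph^{c}$ is continuous on $\calX$ (a $c$-transform of a continuous function on a compact space). Hence $(\ph^{c},\ph)$ is admissible in the Kantorovich dual, so the right-hand side of~\eqref{eq:semidualpsi} is bounded above by $\mathrm{OT}_c(\mu,\nu)$.

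For the reverse inequality I would use the $c$-transform improvement trick. Given any admissible pair $(\phi,\psi)$, the constraint $\phi(x)\le c(x,y)-\psi(y)$ for all $y$ yields $\phi(x)\le\psi^{c}(x)$, while $(\psi^{c},\psi)$ is still admissible; since $\psi^{c}\ge\phi$ we get
\[
  \int\phi\,d\mu+\int\psi\,d\nu \ \le\ \int\psi^{c}\,d\mu+\int\psi\,d\nu,
\]
and the right-hand side has exactly the form in~\eqref{eq:semidualpsi}. Taking the supremum over admissible pairs shows the Kantorovich dual is $\le$ the semi-dual, and combined with the previous paragraph this gives the claimed equality.

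It remains to justify that the supremum is attained, so that ``$\max$'' is warranted. Here I would restrict the optimization to $c$-concave potentials (those equal to some $c$-transform): by uniform continuity of $c$ on the compact set $\calX\times\calY$ they share a common modulus of continuity, and after normalizing by an additive constant they are uniformly bounded, so Arzel\`a--Ascoli extracts a uniformly convergent subsequence from a maximizing sequence whose limit is a maximizer. The main obstacle in the whole argument is the strong-duality step; the $c$-transform manipulations that follow are standard and essentially mechanical.
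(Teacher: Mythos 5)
Your proposal is correct, but note that the paper does not actually prove this statement: Theorem~\ref{th:ot_semidual} is quoted as background and delegated entirely to the cited reference~\cite{santambrogio2015ot}, so there is no internal proof to compare against. Your derivation is the standard textbook route, and essentially the one found in that reference: cite Kantorovich strong duality for the two-potential dual (the genuinely hard step, as you say), observe that $(\ph^{c},\ph)$ is an admissible pair to get one inequality, use the $c$-transform improvement $\phi\le\psi^{c}$ to dominate any admissible pair by one of the special form to get the other, and obtain attainment of the supremum by restricting to $c$-concave potentials, which share the modulus of continuity of $c$ on the compact product, and applying Arzel\`a--Ascoli. Two details are worth spelling out if you were to write this in full. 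First, the normalization ``by an additive constant'' is harmless precisely because $\mu$ and $\nu$ are probability measures, so the objective is invariant under $(\ph,\ph^{c})\mapsto(\ph+C,\ph^{c}-C)$; without this invariance the Arzel\`a--Ascoli step would not apply to a maximizing sequence. Second, to conclude that the uniform limit of the maximizing subsequence is a maximizer you should note that $\ph\mapsto\ph^{c}$ is $1$-Lipschitz for the sup norm, so the functional $\ph\mapsto\int\ph^{c}\,d\mu+\int\ph\,d\nu$ is continuous under uniform convergence. With these two remarks filled in, your argument is complete and matches the proof in the cited source.
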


Semi-dual here refers to the fact that the dual problem is formulated with only one dual variable while the other dual variable is optimized through the $c$-transform.
Combining \eqref{eq:OTcost} and \eqref{eq:semidualpsi}, we get that 
estimating the variable $\theta$ (a new texture or the parameters of a generator) amounts to solving the following  problem
\begin{equation}\label{eq:minimax}
\min_\theta \mathrm{OT}_c(\mu_\theta,\nu)=\min_\theta \max_{\psi \in \mathcal{C}(\mathcal{Y})} \FF(\theta,\psi):= \int \psi^{c}(x) d\mu_\theta(x)+ \int \psi(y) d\nu(y).
\end{equation}

For a fixed $\theta$, the function $\psi \mapsto \FF(\theta,\psi)$ is concave and an optimal $\psi^*$ can be approximated with an averaged stochastic gradient ascent as proposed in~\cite{genevay2016ot}.

When the variable $\theta$ is an image, we propose in Section~\ref{sec:image-optim} to perform a gradient descent, whose outcome is illustrated in Fig.~\ref{fig:accroche}.
A stochastic gradient-based algorithm is finally proposed in Section~\ref{sec:generator} to learn a generative model using a convolutional neural network.
Both approaches exploit the property, demonstrated in Section~\ref{sec:theo}, that, upon existence, the gradient of the optimal transport $\nabla_\theta \mathrm{OT}_c(\mu_\theta,\nu)$ coincides with the gradient $\nabla_\theta \FF(\theta,\psi^*)$ of the function $\FF$ at an optimal value $\psi^*$. 

%

\section{Gradients for the Semi-Dual Optimal Transport Cost}\label{sec:theo}

In this section we study the gradients with respect to $\theta$ and $\psi$ of the optimal transport cost $\FF(\theta,\psi)$ introduced in~\eqref{eq:minimax}.
In the whole section, we will assume that all feature extraction operators  $F_i : \R^m \to \R^d$ are differentiable.

\subsection{The semi-discrete formulation} 
When dealing with texture synthesis from a single image example, the target measure~$\nu$ is an empirical feature distribution with finite support 
$$\nu=\sum_{j=1}^m\delta_{F_j(\vv)},$$
composed of $m$ features $y_j= F_j(\vv)$.
In this case, the semi-dual formulation of optimal transport \eqref{eq:minimax} simplifies to
\begin{equation}\label{eq:semidualot}
\mathrm{OT}_c(\mu_\theta,\nu) = \max_{\psi \in \R^m} \FF(\theta,\psi) = \int \psi^{c}(x) d \mu_\theta(x) + \frac{1}{m}\sum_{j=1}^m \psi_j,
\end{equation}
where $\psi_j=\psi(y_j)$ and where the $c$-transform of $\psi$ writes $\psi^{c}(x) = \min_{j} \left[ c(x,F_j(\vv)) - \psi_j\right]$.
The main interest of this formulation is that it involves only a finite-dimensional vector $\psi \in \R^m$, which can be numerically optimized.
Notice also that the computation of the $c$-transform $\psi^c(x)$ boils down to a biased nearest-neighbor assignment in the feature space.
Combining the texture formation model $\mu_\theta$ introduced in~\eqref{eq:def_mu} and the functional of interest $\FF$ defined in~\eqref{eq:semidualot}, the final problem to optimize reads
\begin{equation}\label{eq:expectation}
\min_\theta\mathrm{OT}_c(\mu_\theta,\nu) =\min_\theta\max_{\psi \in \R^m} \FF(\theta,\psi) = \E_{Z\sim\zeta}\left[ \frac{1}{n}\sum_{i=1}^n \psi^c(F_i \circ g_\theta(Z)) + \frac{1}{m}\sum_{j=1}^m \psi_j \right],
\end{equation}
where $\theta$ can be a single image (Section \ref{sec:image-optim}) or the parameters of a generative network with input noise $Z$ (Section \ref{sec:generator}). 
The end of this section is focused on the computation of the gradients of this quantity with respect to $\theta$ and $\psi$.

\subsection{Gradient with respect to $\theta$} 

This section discusses the computation of the gradient with respect to the parameter~$\theta$ of the optimal transport cost $\mathrm{OT}_c(\mu_\theta,\nu)$. 
We provide a sufficient condition (related to the regularity of the generator $g_{\theta}$ and to the feature distribution $\mu_{\theta}$) that ensures existence of $\nabla_{\theta} \FF(\theta,\psi)$.
Besides, under the same condition, we show that $\nabla_\theta \mathrm{OT}_c(\mu_\theta,\nu) = \nabla_\theta \FF(\theta,\psi^*)$ with $\psi^*\in\argmax_\psi \FF(\theta,\psi)$ as soon as both terms are well defined.\\

As can be observed in expression \eqref{eq:expectation}, the computation of the gradient of $\FF$ with respect to $\theta$  only involves the differentiation of $\psi^c(F_i \circ g_\theta(Z))$.  
In order to examine the regularity of $\psi^c$, we introduce the open Laguerre cells
\begin{equation}
L_i(\psi) = \left\{ x ~|~\forall k\neq i, ~c(x,F_i(\vv)) - \psi_i < c(x,F_k(\vv)) - \psi_k \right\}.
\end{equation}
A simple but crucial remark directly follows from the definition of the $c$-transform: in the Laguerre cell $L_i(\psi)$, the $c$-transform expresses as
\begin{equation}
\forall x \in L_i(\psi), \quad 
\psi^c(x) = c(x,F_i(\vv)) - \psi_j .
\end{equation}
Therefore, $\psi^c$ inherits the regularity of $c$ in the Laguerre cells, and thus, when $c$ is smooth, differentiability problems can only appear at the boundaries of the Laguerre cells.
In order to avoid such singularities, we formulate the following hypothesis that constrains the feature distribution of the texture model.

\begin{hyp}\label{hyp:laguerre}
$g$ satisfies Hypothesis~\ref{hyp:laguerre} at $(\theta, \psi)$ if 
$\zeta\left((F_i\circ g_\theta)^{-1}\{ \cup_j L_j(\psi)  \}\right)=1$ for any position $i$, 
that is, for a given variable $\theta$, all the generated local features are almost surely within the Laguerre cells defined by $\psi$.
\end{hyp}

For example, if $c(x,y)=\|x-y\|_p^p$ with $p>1$, then $\psi^c$ is smooth on $\cup_j L_j(\psi)$, whose complement is negligible for the Lebesgue measure.
Indeed, in this case its complement is given by the union of the sets
\begin{equation}
B_{jk}(\psi) = \left\{ x ~|~ \|x - F_j(\vv))\|_p^p - \psi_j = \|x-F_k(\vv)\|_p^p - \psi_k \right\}
\quad (1 \leq j,k \leq m)
\end{equation}
which has zero Lebesgue measure, because each $B_{jk}$ is contained in a sub-manifold of dimension lower than $p$.
Therefore, Hypothesis~\ref{hyp:laguerre} is satisfied if, for any~$i$, $(F_i\circ g_\theta)\sharp\zeta$ is absolutely continuous with respect to the Lebesgue measure.\\

We also introduce a regularity hypothesis for the generative model $g_\theta$ which will allow us to differentiate under the expectation.
\begin{hyp}\label{hyp:glip} There exists $K:\Theta\times \calZ \to \R_+$ such that for all $\theta$, there exists a neighborhood $V$ of $\theta$ such that $\forall\theta' \in V$ and for $\zeta$-almost every $z$,
\begin{equation}
\|g(\theta, z) - g(\theta',z)\| \leq K(\theta,z)\| \theta-\theta'\|
\end{equation}
with $K$ verifying for all $\theta$,
$\mathbf{E}_{Z\sim\zeta}\left[K(\theta,Z)\right]< \infty$.
\end{hyp}

We can now express the gradient of $\mathcal{J}$ with respect to the parameter $\theta$.

\begin{theorem}\label{thm:dF}
Assume $c$ to be $\mathscr{C}^1$ and assume that the features $F_i$ are all differentiable and Lipschitz. 
Let $g$ satisfy Hypothesis~\ref{hyp:glip}.
Let $\theta_0$ be a point where $\theta\to g_\theta(z)$ is differentiable $\zeta(dz)$-almost surely and let $g$ satisfy Hypothesis~\ref{hyp:laguerre} at $(\theta_0,\psi)$. 
Then $\theta \mapsto \FF(\theta,\psi) $ is differentiable at $\theta_0$ and
\begin{equation} \label{eq:gradF}
\nabla_\theta \FF(\theta_0,\psi) =\frac{1}{n}\sum_{i=1}^n 
\E_{Z\sim\zeta}\left[ \partial_\theta (F_i \circ g)(\theta_0,Z)^T \nabla\psi^c(F_i \circ g(\theta_0,Z))  \right]
\end{equation}
with 
$
\nabla\psi^c(F_i\circ g(\theta_0,z)) = \nabla_x c(F_i\circ g(\theta_0,z), F_{\sigma(i)} \vv)
$

where $\sigma(i)$ is the unique index such that $F_i\circ g(\theta_0,z) \in L_{\sigma(i)}(\psi)$
(which exists $\zeta(dz)$-almost surely).
\end{theorem}
\begin{proof}
From expression~\eqref{eq:expectation}, we see that the proof consists in differentiating the function
\begin{equation}\label{eq:Hi}
H_i(\theta,\psi) = \E\left[h_i(\theta,\psi,Z)\right] 
, \quad \text{with} \quad 
h_i(\theta,\psi,Z) = \psi^c(F_i \circ g_{\theta}(Z)) .
\end{equation}
Thanks to Hypothesis~\ref{hyp:laguerre}, for $\zeta$-almost all $z$, there exists an index $j$ such that ${F_i \circ g_{\theta}(z) \in L_j(\psi)}$ and thus $L_j(\psi)$ is an open neighborhood of $F_i \circ g_{\theta}(Z)$ where $\psi^c$ is differentiable.
Using the chain rule, we  get that for $\zeta$-almost all $z$, $\theta \mapsto h_i(\theta, \psi, z)$ is differentiable at $\theta_0$ and
\begin{equation}
\nabla_{\theta} h_i(\theta, \psi, z) = \partial_\theta (F_i \circ g)(\theta_0,Z)^T \nabla\psi^c(F_i \circ g(\theta_0,Z)) .
\end{equation}
In order to differentiate under the expectation in \eqref{eq:Hi}, we  have to get an integrable bound on the finite differences of $h_i$.
For that, let us denote by $\kappa_c$ the Lipschitz constant of $c$ and $\kappa_i$ the Lipschitz constant of $F_i$.
Let us recall from~\cite{santambrogio2015ot} that $\psi^c$ is also $\kappa_c$-Lipschitz.
Therefore, from Hypothesis~\ref{hyp:glip}, we get a neighborhood $V$ of $\theta_0$ such that for any $\theta\in V$ and for $\zeta$-almost all $z$,
\begin{equation}
|h_i(\theta,\psi,z)) - h_i(\theta_0, \psi,z)| \leq \kappa_c \kappa_i \|g(\theta,z) - g(\theta_0,z)\|
\leq \kappa_c \kappa_i K(\theta,z) \|\theta - \theta_0 \| ,
\end{equation}
with $\E[K(\theta,Z)] < \infty$. This bound allows us to differentiate under the expectation and to get the expression
\begin{equation}
\nabla_{\theta} H_i(\theta_0,\psi) = \E[ \partial_{\theta}(F_i \circ g)(\theta_0,Z)^T \nabla \psi^c(F_i \circ g(\theta,Z)) ] .
\end{equation}
Gathering the terms for all $i$ leads to the desired result.
\qed
\end{proof}

Finally, upon existence, we can relate the gradient of $\FF$ to the gradient of the optimal transport.
\begin{theorem}\label{thm:2}
Let $\theta_0$ such that $\theta \mapsto  \OTc(\mu_\theta, \nu)$  and $\theta \mapsto   \FF(\theta,\psi^*) $ are differentiable at~$\theta_0$ with $\psi^*\in \argmax_\psi \FF(\theta_0,\psi)$ then
\begin{equation}\label{eq:thm}
\nabla_\theta\OTc(\mu_{\theta_0}, \nu)  = \nabla_\theta \FF(\theta_0,\psi^*)
\end{equation}
\end{theorem}
\begin{proof}Let us fix $\psi^*\in \argmax_\psi \FF(\theta_0,\psi)$. 
The function $H(\theta) = \FF(\theta_0,\psi^*) - \OTc(\mu_\theta, \nu)$ is differentiable at $\theta_0$ and maximal at $\theta_0$. Therefore we get $\nabla_\theta H(\theta_0) = 0$.
\qed
\end{proof}

The gradient expression found here will be later used to minimize $\theta\to\OTc(\mu_\theta, \nu)$. 
A stochastic gradient-based algorithm will be used to reach a local minimum of this optimal transport cost and learn the texture model~$g_\theta(Z)$. 
Notice that evaluating $\nabla_\theta \FF(\theta_0,\psi^*)$ in \eqref{eq:thm} requires the knowledge of an optimal potential $\psi^*(\theta_0) \in \argmax_\psi \FF(\theta_0,\psi)$. 
The next section discusses  how to approximate such an optimal potential.

\subsection{Super-gradient with respect to $\psi$}

The computation of the exact transport cost is a challenging task and it has been widely studied in the literature. Recently, a stochastic method for approximating the optimal dual potential for the  semi-discrete case has been studied~\cite{genevay2016ot}. We propose to use this approach to approximate the optimal potential $\psi^*$ with a stochastic gradient ascent scheme.  
Hereafter we recall with proofs some known facts about the concavity and the super-gradients of $\mathcal{J}$, which will be used in the stochastic optimization algorithm.

Let $\theta$ be a fixed parameter, in order to improve the readability we set $G_i = F_i\circ g_\theta$ and remove all the $\theta$ dependencies. The maximization problem we aim at solving writes
\begin{equation}
\max_{\psi\in\mathbf{R}^m} \mathcal{J}(\psi) = 
\mathbf{E}_{Z\sim\zeta}\left[J(\psi, Z) \right],
\textrm{ where } 
J(\psi, z) = \frac{1}{n}\sum_{i=1}^n \psi^c(G_i(z)) + \frac{1}{m}\sum_{j=1}^m \psi_j.
\end{equation}
We first recall the following result from the optimal transport theory.
\begin{theorem} 

(i) For any $z\in\mathcal{Z}$, the function $\psi \to J(\psi,z)$ is concave on $\R^m$. 

(ii) The function $\psi \to \mathcal{J}(\psi)$ is concave on $\R^m$.

\end{theorem}

\begin{proof}
Let $\psi^1\in\mathbf{R}^m$, $\psi^2\in\mathbf{R}^m$ and $t\in[0,1]$ and fix $z\in\mathcal{Z}$. 
Recalling  the $c$-transform definition  $\psi^{c}(x) = \min_{j} \left[ c(x,F_j(\vv)) - \psi_j\right]$, we have
\begin{align}
(t\psi^1+(1-t)\psi^2,z) 
=& \frac{1}{n}\sum_{i=1}^n\min_{j}\left[c(G_i(z), F_j(u_0)) - t\psi^1_j - (1-t)\psi^2_j\right] \nonumber \\
&+\frac{1}{m}\sum_{j=1}^m t\psi^1_j +(1-t)\psi^2_j\\
= &\frac{1}{n}\sum_{i=1}^n\left[c(G_i(z), F_{j^*(i)}(u_0)) - t\psi^1_{j^*(i)} - (1-t)\psi^2_{j^*(i)}\right] \nonumber \\
&+\frac{1}{m}\sum_{j=1}^m t\psi^1_j +(1-t)\psi^2_j,
\end{align}
where $j^*(i) \in \argmin_{j}\left[c(G_i(z), F_j(u_0)) - t\psi^1_j - (1-t)\psi^2_j\right]$. 
Splitting $c(\cdot,\cdot)=tc(\cdot,\cdot)+(1-t)c(\cdot,\cdot)$ and using the property of the minimum function, we get that
\begin{align}
J(t\psi^1+(1-t)\psi^2,z) 
&\geq \frac{1}{n}\sum_{i=1}^nt\min_{j}\left[c(G_i(z), F_j(u_0)) - \psi^1_j\right]\\
&+ \frac{1}{n}\sum_{i=1}^n (1-t)\min_{j}\left[c(G_i(z), F_j(u_0)) - \psi^2_j\right] \\
&+\frac{1}{m}\sum_{j=1}^m t\psi^1_j +(1-t)\psi^2_j\\
&= tJ(\psi^1,z) + (1-t)J(\psi^2,z),
\end{align}
which proves the first point.
The second point follows by taking the expectation of both sides.
\qed
\end{proof}

We can now state the following result that gives a super-gradient for $\mathcal{J}$.

\begin{theorem}\label{thm:subgradient}
Let us denote by $(e_j)_{1 \leq j \leq m}$ the canonical basis of $\mathbf{R}^m$.
Let $z\in\mathcal{Z}$, and $\psi\in\mathbf{R}^m$.
Then a super-gradient of $J(\cdot, z)$ at point $\psi$ is given by
\begin{equation}\label{eq:supergrad}
D(\psi, z) = \frac{1}{m}\mathbf{1}_m - \frac{1}{n}\sum_{i=1}^n\mathbf{e}_{j^*(G_i(z),\psi)},
\end{equation}
where 
\begin{equation}
j^*(G_i(z),\psi) \in \argmin_j\left(c(G_i(z), F_j(v)) - \psi_j\right).
\end{equation}
It follows that
\begin{equation}
\mathcal{D}(\psi) = \E_{Z\sim\zeta}\left[D(\psi, Z)\right], 
\end{equation}
is a super-gradient of $\mathcal{J}$ at point $\psi$.
\end{theorem}

\begin{proof}
Take $z\in\mathcal{Z}$ and $\psi_0\in\mathbf{R}^m$. In order to demonstrate that $D(\psi,z)$ is a super-gradient, we need to show that
 \begin{equation}
\forall \psi' \in \R^m, \quad J(\psi', z) \leq J(\psi, z) + \langle D(\psi,z), \psi' - \psi \rangle.
 \end{equation}
 We have
 \begin{align}
J(\psi, z) -  J(\psi', z)  + \langle D(\psi,z), \psi' - \psi \rangle &= \frac{1}{n}\sum_{i=1}^n \min_j\left[c(G_i(z),F_j(u_0)) - \psi_j\right]\label{eq:firstterm}\\
&- \frac{1}{n}\sum_{i=1}^n \min_j\left[c(G_i(z),F_j(u_0)) - \psi'_j\right]\label{eq:secondterm}\\
&- \frac{1}{n}\sum_{i=1}^n \left( \psi'_{j^*(G_i(z),\psi)} - \psi_{j^*(G_i(z),\psi)} \right).
 \end{align}
Then, $j^*(G_i(z),\psi)$ satisfies the min in \eqref{eq:firstterm} and the min in the second term \eqref{eq:secondterm} is by definition smaller than the value taken at $j^*(G_i(z),\psi)$. Therefore all terms compensate and we have
 \begin{align}
J(\psi, z) -  J(\psi', z)  + \langle D(\psi,z), \psi' - \psi \rangle 
\geq 0.
 \end{align}
 \qed
\end{proof}

In order to approximate an optimal potential, we will rely on an averaged stochastic super-gradient ascent as proposed in~\cite{genevay2016ot}. More precisely, at each step $k$ we sample $z\sim\zeta$ and we update $\psi^k$ as follows
\begin{equation}
\psi^k = \psi^{k-1} + \frac{1}{\sqrt{k}}D(\psi^{k-1},z)
\end{equation}
The final estimate for $\psi^*$ is  obtained by averaging the estimates from a given point $k_0$ (set to 1 in experiments)
\begin{equation}
\hat{\psi}^k = \frac{1}{k-k_0+1} \sum_{\ell=k_0}^{k} \psi^{\ell}.
\end{equation}

Finally, the combination of Theorem \ref{thm:2} and Theorem \ref{thm:subgradient} provides us a way to approximate the gradient of the optimal transport cost $\OTc(\mu_\theta, \nu)$ with respect to $\theta$. Then, we propose to minimize this optimal transport cost by performing a stochastic gradient descent algorithm. In practice we will use the Adam algorithm~\cite{kingma2014adam}. We detail the proposed algorithm and its application to texture synthesis in the next section.



\section{Combining Several Feature Distributions}
\label{sec:feat}

In the previous section, we defined the problem and stated theoretical results for a single collection of local features. 
In practice, texture synthesis requires to simultaneously enforce different collections of features. 
For instance one can use local features at different scales in order to model image patterns of various sizes~\cite{Kwatra,galerne2018texture,shaham2019singan}. 
We now present a general framework able to combine several sets of features.

We therefore consider a set of $\LL$ different features given by $F^l = (F_i^l)_{1 \leq i \leq n_l}$, $l=1, \ldots, \LL$, each one of size ${n_l}$, and we propose to minimize the quantity 
\begin{equation}\label{eq:multiscale}
\mathcal{L}_{\text{GOTEX}}(\theta) = \sum_{l=1}^{\LL} \OTc(\mu_\theta^l, \nu^l)=\sum_{l=1}^{\LL} \max_{\psi^l\in\mathbb{R}^{m_l}}\FF^l(\theta, \psi^l),
\end{equation}
where $\mu_\theta^l$ (resp. $\nu^l$) is the  distribution relative to the feature $l$ of the synthesis (resp. of the example $\vv$ that contains $m_l$ features), i.e. 
$\mu_\theta^l = \frac{1}{n_l}\sum_{i=1}^{n_l}  \delta_{F^l_i(\theta)}$,  
and where 
$$\FF^l(\theta, \psi^l) = \E_{Z\sim\zeta}\left[ \frac{1}{n_l}\sum_{i=1}^{n_l} \psi^{l,c}(F^l_i \circ g_\theta(Z)) + \frac{1}{m_l}\sum_{j=1}^{m_l} \psi^l_j \right],$$
with the $c$-transform defined as $\psi^{l,c}(x) = \min_{1\leq j\leq m_l} \left[ c(x,F^l_j(\vv)) - \psi^l_j\right]$.\\

Upon existence of all terms, the gradient of the quantity \eqref{eq:multiscale} then reads
\begin{align}\label{eq:gradL}
\nabla_\theta \mathcal{L}(\theta) &
 =\sum_{l=1}^{\LL} \nabla_\theta \FF^l(\theta, \psi^{l,*}),
\end{align}
where $\psi^{l,*}$ is an optimal Kantorovich potential for $\OTc(\mu_\theta^l, \nu^l)$. 
The potential $\psi^{l,*}$ can be approximated with a super-gradient ascent (using the super-gradient of $\mathcal{J}^l$ obtained in Theorem \ref{thm:subgradient}). 
Therefore, in order to minimize the loss defined in  \eqref{eq:multiscale}, we propose to repeat the two following steps: 
\begin{enumerate}
\item for each $l$ compute $\hat{\psi}^l$ that approximates $\psi^{l,*}$ with an averaged stochastic super-gradient ascent;
\item perform an optimization step with respect to $\theta$ using \eqref{eq:gradL} with $\hat{\psi^l}$. In the experiments we use the L-BFGS~\cite{lbfgs} for image-based optimization in section~\ref{sec:image-optim} and the Adam algorithm~\cite{kingma2014adam} for neural-network training in section~\ref{sec:generator}.
\end{enumerate} 
The multi-feature process is summarized in Algorithm~\ref{alg:MStexgenGD}.
The notation $D(\psi,z)$ was defined in~\eqref{eq:supergrad} but is changed here in $D(\theta,\psi,z)$  to recall the dependency on the current $\theta$.
We now study different choices for the features $(F^l)$ for texture synthesis.

\begin{algorithm}[ht!]\small
   \caption{Texture Synthesis with 
   Prescription of Several Feature Distributions}
   \label{alg:MStexgenGD}
\begin{algorithmic}[1]
   \State {\bfseries Input:} target image $\vv$, initial parameter $\theta_0$, learning rates $\eta_\theta$ and $\eta_\psi$, number of iterations $N_{u}$ and $N_{\psi}$, number of features $\LL$, {optimizer \emph{Optim} (s.t. Adam or L-BFGS)}
   \State {\bfseries Output:} learned parameter $\theta^*$
   \State $\theta \leftarrow \theta_0$ and $\psi^{l,0}=0 \text{ for } l=1\ldots \LL$
   \For{$k=1$ {\bfseries to} $N_{\theta}$}
   \For{$l=1$ {\bfseries to} $\LL$}
   \State $\tilde \psi^{l,0}\leftarrow \psi^{l,k-1}$ 
   
   \For{$\ell=1$ {\bfseries to} $N_{\psi}$}
   
   \State Draw a sample $z \sim \zeta$.
   \State$\tilde \psi^{l,\ell}=\tilde \psi^{l,\ell-1}+\eta_\psi D(\theta^{k-1}, \tilde \psi^{l,\ell-1}, z) $

   \Comment{Gradient ascent on $\psi^{l,k}$ (Theorem \ref{thm:subgradient})}
   \EndFor
   
   \State $\psi^{l,k}\leftarrow\tilde \psi^{l,N_{\psi} }$

   \EndFor
   \State $\theta^{k} \leftarrow 
   {Optim}(\sum_{l=1}^{\LL} \nabla_\theta \FF^l(\theta^{k-1}, \psi^{l,k}),\eta_\theta)$\Comment{Gradient-based update of $\theta$ (Theorem \ref{thm:dF})} 
   \EndFor
\end{algorithmic}
\end{algorithm}

\subsection{Gaussian pyramid of patches}\label{ssec:pyramid}

The simplest local features we can define are patches. Patches are small square sub-images of size $s\times s$, and we define the $i$th patch extractor $P_i$ as the linear operation that extracts the $s\times s$ pixels around the pixel $i$ of an image. 
In this paragraph, the feature operators $F_i$ are defined from such patch extractors operating at different scales. 

As previously mentioned, prescribing the distribution of patches of different sizes is necessary to model image patterns of different scales~\cite{Kwatra,galerne2018texture}. 
Then, in order to construct a multi-scale collection of patches, we create a pyramid of down-sampled and blurred images. For each scale $l = 1, \ldots, \LL$, we use a linear blurring and down-sampling operator~$G_s$ that computes, for an image $u$ with $n = n_W \times n_H$ pixels,  a reduced version $u_l = G_l u$ with $n_l = \frac{n_H}{2^{l-1}} \times \frac{n_W}{2^{l-1}}$ pixels.
We then define by $P_i^l$ the operator that extracts the $i$th patch of the blurred and down-sampled image $u_l$, that is $P_i^l = P_i\circ G_l$. In~\cite{Houdard_ssvm21} we demonstrated that state-of-the art texture synthesis results can be obtained with the collection of features $\{P_i^l\}_{i,l}$.

In the following, when using such patch features within our optimal transport framework to estimate an image $\theta$, we consider the feature index $l$ for scales and the loss we aim at minimizing writes
\begin{equation}
\label{eq:OTpatchloss} 
\mathcal{L}_{\text{patch}}(\theta) = \sum_{l=1}^{\LL} \OTc(\mu_\text{pat}^l(\theta), \nu_\text{pat}^l)
, \quad
\text{where} \quad
\mu_\text{pat}^l(\theta) = \frac{1}{n_l}\sum_{i=1}^{n_l} (P_i\circ G_l \circ g_\theta)\sharp\zeta
\end{equation}
and where similarly $\nu_\text{pat}^l$ is the blurred and down-sampled  patch distribution of the example image example $\vv$.
The loss \eqref{eq:OTpatchloss}  is referred to as the \textbf{GOTEX-patch} loss.

\subsection{VGG features}\label{sec:VGG}
The use of \emph{deep features} extracted from a deep neural network has proven to be successful for texture generation. In~\cite{gatys_texture_2015}, the authors proposed to use the outputs from different layers of the pre-trained VGG-19 network from~\cite{VGG}. This fully convolutional network was introduced by the Visual Geometry Group (VGG) from the University of Oxford. It consists of a sequence of sixteen convolutional layers of kernel size $3\times3$ with five $2\times2$ max-pooling layers with stride 2,  and  three final fully connected layers (see~\cite{VGG} for the detailed architecture).

In this work, we are using the VGG-19 network with weights that were pre-trained for an image classification task on the ImageNet dataset~\cite{deng2009imagenet}. We also replaced the max-pooling layers with average-pooling layers as done in~\cite{gatys_texture_2015} in order to reduce checkerboard artifacts. 
For the features, we consider the outputs from five layers ($\LL = 5$): the first convolutional layer and then the first convolutional layer after each pooling layer (\textrm{pool1} to \textrm{pool4}). The related loss to optimize reads
\begin{equation}
\label{eq:OTVGGloss} 
\mathcal{L}_{\text{VGG}}(\theta) = \sum_{l=1}^{\LL} \OTc(\mu_\text{VGG}^l(\theta), \nu^l_\text{VGG})
, \quad 
\text{with} \quad
\mu_\text{VGG}^l(\theta) = \frac{1}{n_l}\sum_{i=1}^{n_l} (F_i^l \circ g_t)\sharp\zeta
\end{equation}
and where $F_i^l$ corresponds to the normalized\footnote{Features maps are normalized by the tensor dimension, that is spatial and channels size.} $i$th output from the layer $l$ of the VGG-19 network and $n_l$ the spatial size of this layer, and where similarly $\nu^l_\text{VGG}$ is the feature distribution of the example at layer $l$.
The loss~\eqref{eq:OTVGGloss} will be referred to as the \textbf{GOTEX-VGG} loss.

\subsection{Mixing features}\label{sec:mixed}

The major flaw from VGG-19 feature decomposition (see Section~\ref{ssec:comp_VGG}) is that visual artifacts such as checkerboard patterns or color inconsistencies can appear on the synthesized textures. Generally speaking, these problems are solved with a posterior histogram equalization or with the use of a median filter on the final result. On the other hand, with the multiscale patch decomposition (Section~\ref{ssec:pyramid}) one may fail to recover thin details at larger scales (due to the blurring in the Gaussian pyramid). 
Conversely, patches at the image scale accurately restore both the local details and the color consistency of the image, while VGG features from deeper layers can represent well larger patterns of the image.

Since several distributions of features can be combined with the GOTEX model, we propose to both enforce the VGG-19 features from deep layers -- which represent large structures and global geometry from the target texture -- and patches features from the firsts scales of the image -- which represent local details and color distributions from the target texture. 
We therefore propose to optimize the following loss
\begin{equation}
\label{eq:OTmixedloss} 
\mathcal{L}_{\text{\mixed}}(\theta) =
\lambda \OTc(\mu_\text{pat}^1(\theta), \nu^1_\text{pat})
+
\sum_{l=2}^{\LL} \OTc(\mu_\text{VGG}^l(\theta), \nu^l_\text{VGG})
\end{equation}
where the scalar $\lambda\geq 0$ is used to balanced the discrepancy of the range values between patches and VGG features. The loss~\eqref{eq:OTmixedloss} with $\lambda=1$ will be referred to as the \textbf{GOTEX-\mixed} loss.

For the sake of completeness, we also consider a loss that includes patch and VGG features at all scales:
\begin{equation}
\label{eq:OTallloss} 
\mathcal{L}_{\text{all}}(\theta) =
\lambda \sum_{l=1}^{\LL} \OTc(\mu_\text{pat}^l(\theta), \nu^l_\text{pat})
+
\sum_{l=1}^{\LL} \OTc(\mu_\text{VGG}^l(\theta), \nu^l_\text{VGG})
\end{equation}
For $\lambda = 1$, this will be referred to as the \textbf{GOTEX-all} loss.

\subsection{Texture barycenters}\label{sec:bary}

In the framework of Gatys et al.~\cite{gatys_texture_2015}, the Gram loss does not represent a distance between distributions. Therefore, it does not provide relevant results when dealing with texture interpolation. 
This issue has already been pointed out for instance in~\cite{vacher2020texture}. 
In contrast, when $c(x,y)= \|x-y\|^p$, the optimal transport cost is related to a true distance
\begin{equation}
W_p(\mu,\nu) = \OTc(\mu,\nu)^\frac{1}{p}
\end{equation}
called the $p$-Wasserstein distance, which allows to define relevant texture interpolation paths.
Indeed, restricting to the cost $c(x,y)=||x-y||^2$, we will now show the connection between $2$-Wasserstein barycenters and the interpolation of $K$ textures $u_0, \ldots, u_{K-1}$.
We recall that $\mathcal{L}_\text{GOTEX}(\theta, u_k)$ is the loss function~\eqref{eq:multiscale} related to the exemplar texture~$u_k$.
This loss depends on the feature distribution $(\nu_k^l)_{1 \leq l \leq N_F}$ extracted from $u_k$, for any set of features described in the last paragraphs.
Given also weights $\alpha_0, \ldots, \alpha_{K-1} \geq 0$ such that $\alpha_0 + \ldots + \alpha_{K-1} = 1$, we propose to define a Wasserstein barycenter of the textures $(u_0, \ldots, u_{K-1})$ with weights $(\alpha_1, \ldots, \alpha_K)$ as the texture distribution $g_{\theta} \sharp \zeta$ where~$\theta$ minimizes the quantity
\begin{equation}\label{eq:bar_loss}
\sum_{k=0}^{K-1} \alpha_k \mathcal{L}_\text{GOTEX}(\theta, u_k)
= \sum_{k=0}^{K-1}  \sum_{l=1}^{N_F} \alpha_k W_2^2(\mu_{\theta}^l, \nu_k^l)
\end{equation}
In comparison, for each separate feature $l$, the usual Wasserstein barycenter of $\nu_1^l, \ldots, \nu_K^l$ defined in~\cite{agueh2011barycenters} is a solution of
\begin{equation}
 \argmin_{\mu^l} \sum_{k=0}^{K-1} \alpha_k W_2^2(\mu^l, \nu_k^l) .
\end{equation}
Our definition of Wasserstein texture barycenter is thus an adaptation  of this notion with two modifications. First, for each feature $l$, $\mu_{\theta}^l$ is constrained to be the distribution of the feature $l$ of $g_{\theta} \sharp \zeta$. Second, all features $l$ are treated simultaneously.
The barycentric loss~\eqref{eq:bar_loss} thus provides an interpolation path between textures $u_0, \ldots, u_{K-1}$ while belonging to a set of texture models constrained by the choice of $g_{\theta} \sharp \zeta$.

%

\section{Single Image GOTEX}\label{sec:image-optim}

In this section we focus on the particular case of synthesizing a single image. This corresponds to the case where the model just generates a single image $\theta$ by taking $g_\theta(z) = \theta-z$ for all $z$ and $\zeta = \delta_0$ (see remark~\ref{rem:si}). This amounts to minimize w.r.t. the image $\theta$ the optimal transport cost between its discrete feature distribution $\mu_\theta$ and a discrete target feature distribution $\nu$. 
This yields a pixelwise optimization algorithm that  minimizes a fully discrete optimal transport cost between patch distributions.

We present in section \ref{sec:SIGOTEX} the single image setting of our GOTEX framework. We provide experimental results using the features described in section \ref{sec:feat} and discuss their pros and cons. 
Related texture synthesis methods are briefly reviewed in section \ref{sec:previous_work_OT}. In particular, we show that using image patches as features in this single image setting gives an elegant interpretation of the algorithm of~\cite{Gutierrez_ssvm2017} which combines the patch-based optimization framework from~\cite{Kwatra} with optimal transport. 

Our Single Image GOTEX approach, using either patch or deep features, is  compared in section \ref{sec:exp_image_optim} to the previously introduced state-of-the-art methods. 
We illustrate that the proposed  hybrid losses \eqref{eq:OTmixedloss} and  \eqref{eq:OTallloss}  allows to capture higher order statistics than the perceptual loss proposed in~\cite{gatys_texture_2015}. Finally, we show in section \ref{ssec:other_app} that our GOTEX framework is well-suited for other tasks related to texture imaging such as texture barycenters and texture inpainting.

\subsection{Single texture synthesis}\label{sec:SIGOTEX}

In the case of a single image generation, the GOTEX framework amounts to minimize a discrete optimal transport cost. Let $\theta \in \R^n$  be the image to synthesize with $n$ pixels and $\mu_\theta= \frac{1}{n}\sum_{i=1}^n \delta_{F_i(\theta)}$ its discrete patch distribution. In order to impose on the image~$\theta$ the patch distribution $\nu=\frac{1}{m}\sum_{j=1}^m \delta_{F_j(u_0)}$ of the exemplar image $\vv$, we aim at solving problem \eqref{eq:expectation} that here simplifies to 
\begin{equation}
\min_{\theta \in \R^n} \mathrm{OT}_c\left(\mu_\theta, \nu \right) = \min_{\theta \in \R^n} \max_{\psi \in \R^m} \FF(\theta,\psi). \label{eq:discretepb}
\end{equation}
In this particular discrete case, the functional of interest writes
\begin{equation}\label{eq:FF}
\FF(\theta,\psi) = \frac{1}{n} \sum_{i=1}^n \psi^c(F_i (\theta) )+   \frac{1}{m}\sum_{j=1}^m\psi_j,
\end{equation}
with
\begin{equation}
\psi^c(F_i(\theta)) = \min_j\left[c(F_i(\theta), F_j(u_0)) -\psi_j\right].
\end{equation}
The minimization of \eqref{eq:discretepb} is then done with the proposed algorithm \ref{alg:MStexgenGD} which is in this case \textbf{not} stochastic. From now on and for all the experiments, we set the cost $c$ to be the quadratic cost and in the following we illustrate the versatility of our framework with synthesis experiments on various textures and features.
After introducing the experimental set-up in section \ref{ssec:numerical}, results obtained from different feature choices (exposed in section~\ref{sec:feat}) are discussed in section \ref{ssec:comp_loss}. In section \ref{ssec:stability} we finally discuss the robustness of the method with respect to the choice of the initial image.

\subsubsection{Experimental setting}\label{ssec:numerical}

In this section, the L-BFGS algorithm~\cite{lbfgs} is used as the optimizer in Algorithm \ref{alg:MStexgenGD}.
We resort to the PyTorch implementation with the default parameter setting (for instance, $\eta_\theta = 1$).
The initial image $\theta^0$ is randomly sampled from a Gaussian white noise of mean $m_\text{target}$ and variance $0.01\sigma_\text{target}$ where $m_\text{target}$ and $\sigma_\text{target}$ are respectively the mean and the variance of the target image.
The patch size used for the Gaussian pyramid of patches is $4\times 4$ pixels at each scale, and the number of scales is $N_F=4$.
We also use the KeOps library~\cite{keops} to define the cost matching kernel and compute the biased nearest neighbor map \eqref{eq:jstar} efficiently on GPU. 
The number of inner iterations $N_\psi$ is set to 10.
Example images are all of size $256\times256$ in order to compute VGG features appropriately\footnote{Note that a multi-scale approach is required for high resolution synthesis with VGG features as studied in~\cite{gatys2017controlling}.}. 
The VGG features are computed using the pre-trained VGG-19 network from the PyTorch library.

\subsubsection{Experimental results with various features}\label{ssec:comp_loss}

Here we present the single image synthesis results from our GOTEX framework using the losses GOTEX-patch \eqref{eq:OTpatchloss} , GOTEX-VGG~\eqref{eq:OTVGGloss}, GOTEX-\mixed~\eqref{eq:OTmixedloss} and GOTEX-all~\eqref{eq:OTallloss}. We present in Figure \ref{fig:compfeature} these results for $4$ given exemplar textures. Since we use the same framework either with patches or with VGG features, we are able to discuss precisely the pros and cons of each feature representation. The GOTEX-patch method (second column of Fig.~\ref{fig:compfeature}) shows good color consistency and respects the texture statistics at each scale. On the other hand, it tends to produce slightly smoother results. The GOTEX-VGG approach (third column of Fig.~\ref{fig:compfeature}) produces sharper results but at the cost of color inconsistencies and visual artifacts such as checkerboard artifacts~\cite{odena2016deconvolution}. These artifacts can be explained by the back-propagation through the non-linear VGG network 
and are known to appear with perceptual losses based on VGG, see e.g.~\cite{sajjadi2017enhancenet,johnson2016perceptual}. 
The over-smoothing effect when using patches is explained by the fact that the algorithm tends to make a compromise between the patches at each iteration (see section \ref{ssec:patches} for details on this point). Finally, by combining patches and VGG features (fourth column of Fig.~\ref{fig:compfeature}), the GOTEX-\mixed~method tackles both the color and artifact issues from the VGG features and the smoothing effect observed with GOTEX-patch.
The GOTEX-all method (fifth column of Fig.~\ref{fig:compfeature}) produces similar results but with long-range structures that are slightly better retrieved.

\begin{figure}[pt]
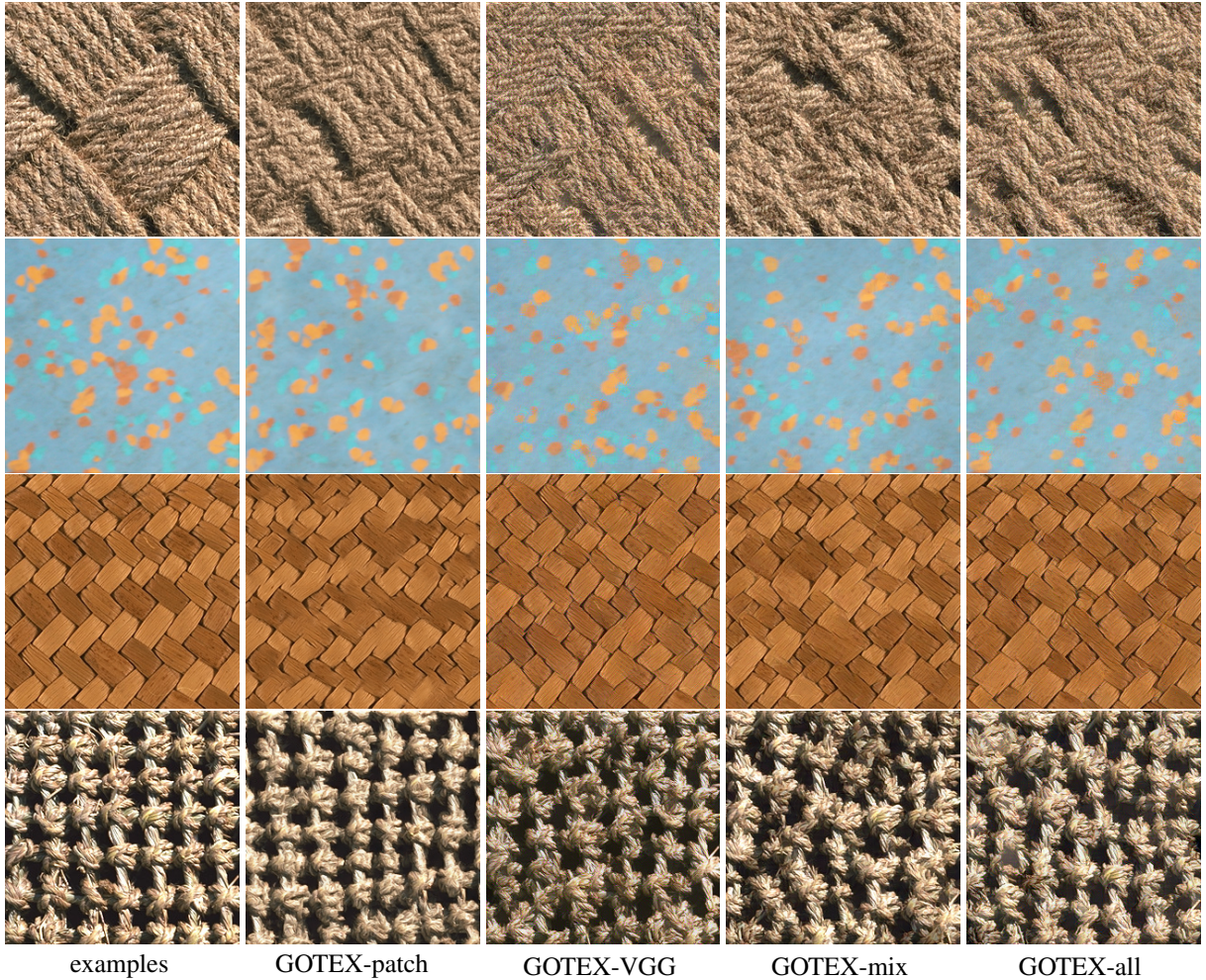

\centering
\newlength{\lfig}
\setlength{\lfig}{0.19\linewidth}

\foreach \x in {cordes_256, ground1013_small_256, raad1_256, raad8_256}
{\includegraphics[width=\lfig]{\x} %
\includegraphics[width=\lfig]{generated_\x_to_512} %
\includegraphics[width=\lfig]{\x_OTVGG_LBFGS_final} %
\includegraphics[width=\lfig]{\x_OTmixed_LBFGS_final} %
\includegraphics[width=\lfig]{\x_OTfullmixed_LBFGS_final}\\
}

\foreach \x in {examples, {\hspace*{-2mm}GOTEX-patch}, GOTEX-VGG, GOTEX-\mixed, GOTEX-all}
{\begin{minipage}{\lfig}
\centering
\x
\end{minipage} }

\caption{Results of  GOTEX (Alg. \ref{alg:MStexgenGD}) for image optimization using various combinations of features: 
 patches (\emph{GOTEX-patch}), 
 VGG (\emph{GOTEX-VGG}), 
mixing patches from higher scales with VGG from lower scales (\emph{GOTEX-\mixed}), 
and combining all features (\emph{GOTEX-all}).
}
\label{fig:compfeature}
\end{figure}

\subsubsection{Importance of the initialization}\label{ssec:stability}

When using the VGG features, the choice of the initial image $\theta^0$ has a strong impact on the final result. This can be explained by the fact that deeper layers in the VGG network only encode structures and do not encode colors. Therefore, the minimization of GOTEX-VGG is likely to find a texture that is a local minimum but has the wrong color distribution. This behavior was already documented in~\cite{gatys_texture_2015} where the authors proposed to perform an histogram matching at the end of the synthesis process. On the contrary, we show in Figure \ref{fig:stability} that using patches provides a very consistent synthesis method, not depending on the initial image $\theta^0$ .

\begin{figure}[t]
\setlength{\flen}{0.12\linewidth}
\centering
\setlength{\tabcolsep}{1pt}
\begin{tabular}{ccccccc}
\raisebox{.5\flen}{\includegraphics[width=\flen]{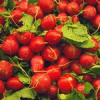}}
&\includegraphics[width=2\flen]{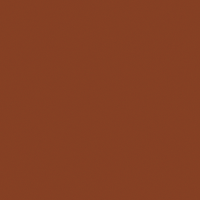}%
&\includegraphics[width=2\flen]{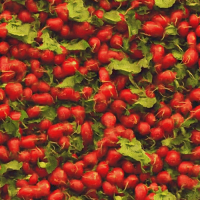}%
& \includegraphics[height=2\flen]{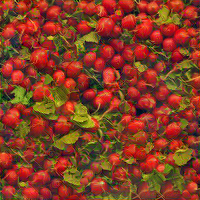}
\\
\raisebox{.5\flen}{\includegraphics[width=\flen]{radishes.png}}
&\includegraphics[width=2\flen]{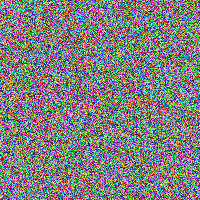}
&\includegraphics[width=2\flen]{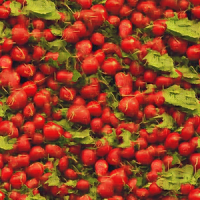}%
& \includegraphics[height=2\flen]{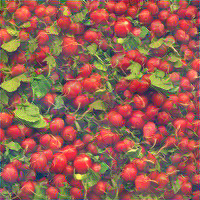}
\\
\raisebox{.5\flen}{\includegraphics[width=\flen]{radishes.png}}
&\includegraphics[width=2\flen]{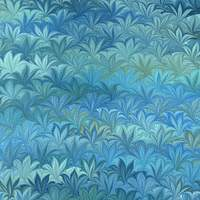}
&\includegraphics[width=2\flen]{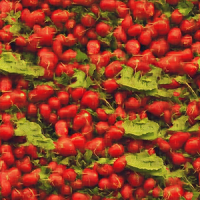}%
& \includegraphics[height=2\flen]{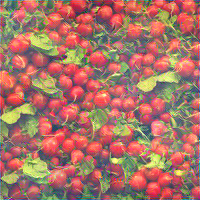}

\\
  \footnotesize a. Sample %
& \footnotesize b. Initialization
& \footnotesize c. GOTEX-patch
& \footnotesize d. GOTEX-VGG
\\
\end{tabular}
\caption{Both GOTEX-patch and GOTEX-VGG 
are run for the same $100\times100$ sample (a) with three initial images (b). GOTEX-path produces faithful $200\times 200$ synthesis (c) for any initialization whereas the GOTEX-VGG results (d) tends to produce color inconsistencies and artifacts when the color palette of the initial guess is not close enough to the target image.
\label{fig:stability}}
\end{figure}

\subsection{Related works on optimal patch transport}

\label{sec:previous_work_OT}

We first show in section \ref{ssec:patches}
that using image patches as features in the single image setting gives an elegant interpretation of the algorithm of~\cite{Gutierrez_ssvm2017} which combines the patch-based optimization framework from~\cite{Kwatra} with optimal transport.
In section~\ref{ssec:previous_work_OT}, we provide a review  of  models from the literature based on approximated optimal transport. We also detail how the proposed framework built upon semi-discrete optimal transport both encompasses these models and alleviates some of their caveats. 

\subsubsection{The specific case of patch representation}\label{ssec:patches}

We detail here the single image optimization framework in the case where we use the patch representation (as defined in section \ref{ssec:pyramid}). 
Recall that, as summarized in Alg.~\ref{alg:MStexgenGD}, all scales $l = 1, \ldots, N_F$ are treated simultaneously with GOTEX, and we aim at solving as many optimal transport problems. 
For the sake of simplicity, we focus on the single scale $l=1$ case in this paragraph. 
In this particular case with patch features, the functional $\FF$ from \eqref{eq:FF} writes
\begin{equation}
\FF(\theta,\psi) = \frac{1}{n} \sum_{i=1}^n \psi^c(P_i \theta) +   \frac{1}{m}\sum_{j=1}^m\psi_j,
\end{equation}
with
\begin{equation}
\psi^c(P_i\theta) = \min_j\left[c(P_i\theta, P_j\theta) -\psi_j\right].
\end{equation}

Now, we will see that using the quadratic cost $c$, and using a simple gradient descent for the update on $\theta$, the GOTEX algorithm \ref{alg:MStexgenGD} admits an interpretation in terms of iterated weighted nearest neighbor assignments.
Indeed, in the case $c(x,y) = \frac{1}{2}\|x-y\|^2$, Theorem~\ref{thm:2} ensures that
\begin{equation}\label{eq:grad_W2}
\nabla_\theta \FF(\theta^{k-1},\psi^{k}) =  \frac{1}{n} \left( \sum_{i=1}^nP_i^TP_i\theta^{k-1} - \sum_{i=1}^nP_i^TP_{\sigma^{k}(i)}\vv\right),
\end{equation}
at any point $(\theta^{k-1},\psi^{k})$ where we can uniquely define
\begin{equation}\label{eq:jstar}
\sigma^{k}(i) = \argmin_j \frac{1}{2}\| P_i\theta^k-P_j\vv \|^2 - \psi^{k}_j \quad \forall i = 1, \ldots, n .
\end{equation}
Notice that $P_j$ is a linear operator whose adjoint operator  $P_j^T$ maps a given patch $q$ to an image whose $j$-patch is $q$ and is zero elsewhere. Therefore $\sum_{i=1}^nP_i^T$ corresponds to a uniform patch aggregation. To simplify,  we consider periodic conditions for patch extraction, so that $\sum_{i=1}^nP_i^TP_i = s^2\Id$, where $s^2$ denotes the number of pixels in the $s \times s$ patches.
Hence, from \eqref{eq:grad_W2} and considering a step size $\eta \frac{n}{s^2}$, $\eta>0$, the update of $u$ through gradient descent writes
\begin{equation}
\theta^{k} = (1-\eta) \theta^{k-1} + \eta v^{k},
\end{equation}
where $v^{k} =  \frac{1}{s^2}\sum_{i=1}^n P_i^T P_{\sigma^{k}(i)} \vv$ is the image formed with patches from the exemplar image $\vv$  which are the nearest neighbors to the patches of $\theta^k$ in the sense of \eqref{eq:jstar}. 
The gradient step then mixes the current image $\theta^k$ with $v^k$. 
In the case $\psi = 0$, the minimum in \eqref{eq:jstar} is reached by associating to each patch of $\theta^k$ its $\ell_2$ nearest neighbor in the set $\{P_1\vv,\ldots, P_n\vv\}$. As described below, the case $\psi=0$ exactly corresponds to the texture optimization method~\cite{Kwatra}.

\subsubsection{Comparison with previous works on optimal patch transport}
\label{ssec:previous_work_OT}
\paragraph{Texture Optimization (TexOptim)}
In~\cite{Kwatra}, a multi-scale algorithm similar to ours is used for synthesis. It includes additional optimization ``tricks'' to accelerate the practical convergence.
First, the optimal transport cost between patch distributions is replaced by nearest-neighbor matching, which itself results in visible discrepancy between color distributions for texture synthesis (see e.g. Figure \ref{fig:patch_comp}, column b).
Additionally, a coarse-to-fine (a.k.a \emph{multi-grid}) approach is used, whereas every scales are optimized simultaneously in the proposed algorithm. At a given scale $l$ (using the Gaussian pyramid of patches defined in \ref{ssec:pyramid}), the problem originally formulated in~\cite{Kwatra} reads as an explicit patch matching
\begin{equation}\label{eq:textoptim_loss}
\sum_{i=1}^{n_l} 
    \min_{j} \|P_i^l \theta - P_j^l u_0\|^r
\end{equation}
where $r=0.8$ to enforce consistency between overlapping patches, in such a way that synthesized textures are local verbatim copies of the original image.
In practice, this non-convex optimization problem is addressed using an iterative reweighted least square methods (IRLS): denoting $\tilde \sigma^{k}(i)$ the nearest-neighbor (NN) matching defined from \eqref{eq:jstar} by setting $\psi^k_j = 0$
\begin{equation*}\label{eq:NN_assignment}
\tilde \sigma^{k}(i) = \argmin_j \frac{1}{2}\| P_i^l \theta^k - P_j^l\vv \|^2  \quad \forall \ i = 1, \ldots, n_l \; ,
\end{equation*}
the updated image $\theta^k$ at scale $l$ reads (again, considering periodic conditions)
$$
\theta^k(i) = \tfrac1{s^2} \sum_{i=1}^{n_l} w_i {P_i^l}^T P_{\tilde \sigma^k(i)}^l u_0
$$
where weights $w_i$ are defined according to IRLS 
$$
    w_i = \|P_{i}^l \theta^k - P_{\tilde \sigma^k(i)}^l u_0\|^{r-2}.
$$
Without statistical consistency guaranteed by optimal transport, an important aspect of this approach is the initialization: to ensure the statistical consistency at the coarsest scale, a random permutation of the patch of $u_0$ is used to set $\theta^0$.
Last, to avoid the excessive blurring resulting from overlapping patches, a stride of $s/4$ is used to sample large patches (ranging from $s=8$ to $s=32$ during optimization to enforce local copy).

\paragraph{Optimal Patch Assignment (OPA)}

To overcome the statistical inconsistency from nearest-neighbor matching,~\cite{Gutierrez_ssvm2017} 
enforces instead the optimal patch assignment between the discrete distribution of $n_l$ patches.
More precisely, the objective loss function now reads as (at a given scale $l$)
\begin{equation}\label{eq:OPA_loss}
 \min_{\sigma \in \Sigma_{n_l}} \sum_{i=1}^{n_l} 
    {\|P_i^l \theta - P_{\sigma(i)}^l u_0\|}_{1,2}
\end{equation}
where $\Sigma_{n}$ is the set of $n$ permutations, and
$\|.\|_{1,2}$ stands for the sum, over pixel coordinates $i$, of the Euclidean norm of pixelwise color vectors. 
By setting $r=1$, the problem of optimizing $\theta$ for a fixed assignment $\sigma$ corresponds to computing a color median. This helps reducing the blur obtained from averaging overlapping patches when using $r=2$ instead.
Rather than resorting to the proposed semi-discrete formulation, an Hungarian algorithm~\cite{Kuhn55thehungarian} is used instead to solve the optimal assignment between patches at each iteration.
Note that a similar but faster approach has been proposed in~\cite{webster2018innovative}, mainly approximating optimal assignments by soft assignments computed from the Sinkhorn Algorithm~\cite{cuturi2013sinkhorn}. 
The main drawbacks of these approaches is computation time and memory requirements to compute the assignment maps $\sigma$.
For instance, both methods require to compute and store the cost matrix for finding patch correspondences. This becomes prohibitive for large image size and it cannot benefit from GPU acceleration.

To accelerate convergence, as in~\cite{Kwatra}, large patches of $s=8$ pixels are therefore sampled on a decimated grid (i.e. stride of 2).
A Douglas-Rachford optimization algorithm is also required after each optimal patch assignment to compute the corresponding color median to update $\theta$. 
In Figure \ref{fig:patch_comp}, one can observe that the results are close to the ones obtained with the GOTEX Algorithm \ref{alg:MStexgenGD}, yet only using $4\times 4$ patches without stride nor median.

\paragraph{Sliced Wasserstein transport}

In~\cite{rabin2011wasserstein}, an approximate optimal transport cost function coined {\em Sliced Wasserstein distance} was introduced for texture synthesis and mixing. This framework is inspired by~\cite{portilla_simoncelli_2000} where an image is progressively synthesized from coarse to fine scale by sequentially matching its statistics to an exemplar image.
Originally, first and second order statistics of wavelet coefficients across scale and space 
were considered, restricting the generated textures to short range correlations.
In~\cite{rabin2011wasserstein}, the distribution of patches of wavelet coefficients was considered instead, to successfully synthesize large range structures. The projection of  patch distributions from the synthesized image to the desired distribution is ensured by minimizing the Sliced Wasserstein cost function using stochastic gradient descent.
Since then, it has been used in various imaging problems involving statistical comparison, such as in~\cite{kolouri2018sliced} to train auto-encoders.

To appropriately compare this approach to the proposed multi-scale patch-based optimization, we now consider adaptation of the GOTEX Algorithm \ref{alg:MStexgenGD} to the case of Sliced Wasserstein, in such a way that the loss function to minimize becomes
\begin{equation}\label{eq:MSloss_SW}
\mathcal{L}_{\text{SW}}(\theta) = \sum_{l=1}^{\LL} 
    \text{SW}^2(\mu^l_{\theta},\nu^l)
    .
\end{equation}

At a given scale $l$, the quadratic Sliced Wasserstein cost $\text{SW}^2$ between discrete distributions $\mu_\theta^l= \frac{1}{n}\sum_{i=1}^n \delta_{P_i \theta}$ and $\nu^l$ writes as the expected transport cost of the projected distributions
\begin{equation}\label{eq:SW}
    \text{SW}^2(\mu_\theta^l,\nu^l) =  
        \E_{\omega \sim \mathcal{U}(\mathbb{S}^{d})}
        \min_{\sigma \in \Sigma_{n_l}}
        \frac1{n_l} \sum_{i=1}^{n_l} 
        \langle P_i^l \theta - P_{\sigma(i)}^l u_0, \omega \rangle^2
\end{equation}
where $\mathcal{U}(\mathbb{S}^d)$ indicates the uniform distribution on the $d$-dimensional sphere.
The gradient $\mathcal{L}_{\text{SW}}$ can be written explicitly~\cite{rabin2011wasserstein}, but one can also rely on auto-differentiation as done for instance in~\cite{kolouri2018sliced} and for our experiments.
The main limitations of this approximation are, to begin with,
that it requires the two distributions to have the same number of samples $n$ with uniform probability, 
that the required number $k$ of drawn directions $\omega$ for stochastic optimization increases with the dimension $d$ of the features,
and that it requires to sort values which cannot benefit from GPU acceleration.

\subsection{Experimental comparisons}\label{sec:exp_image_optim}
We  now illustrate the benefit of the GOTEX method with respect to related methods in the literature. We first focus on patch-based method and then study VGG features.

\subsubsection{Comparisons with patch-based methods}\label{ssec:patch_comp}

Figure \ref{fig:patch_comp} illustrates the results of different methods discussed in section \ref{ssec:previous_work_OT} using image optimization based solely on patch representation. 
As already mentioned, when comparing TexOptim (Fig.~\ref{fig:patch_comp}.b) to OPA (Fig.~\ref{fig:patch_comp}.c), or any other method based on optimal transport (Fig.~\ref{fig:patch_comp}.d and e), the lack of statistical consistency is visually striking.
While benefiting from a noticeable speed-up when approximating OPA with Sliced Wasserstein, the resulting texture is more blurry (mostly noticeable within the flowers in Fig.~\ref{fig:patch_comp}.d). 
This is mainly due to the fact that the optimization of SW is stochastic by nature, sampling randomly new directions at each step.
By contrast, the proposed algorithm manages to produce qualitative results (Fig.~\ref{fig:patch_comp}.e), without any approximations.

\begin{figure}[ht!]
\setlength{\flen}{0.19\linewidth}
\centering
\setlength{\tabcolsep}{2pt}
\begin{tabular}{c cccc}
    \includegraphics[width=\flen]{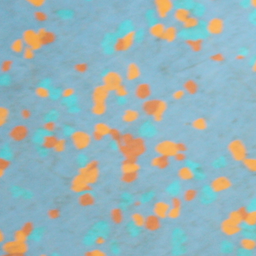}
    &\includegraphics[width=\flen]{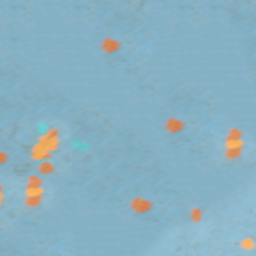}
    &\includegraphics[width=\flen]{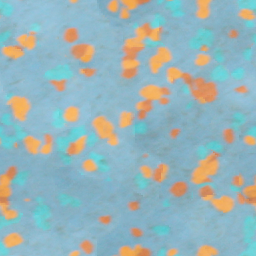}
    &\includegraphics[width=\flen]{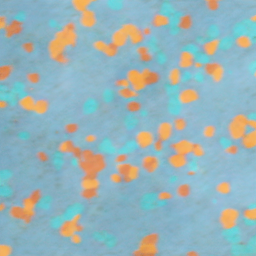}
    &\includegraphics[width=\flen]{generated_ground1013_small_256_to_512.png}
    \\
    \includegraphics[width=\flen]{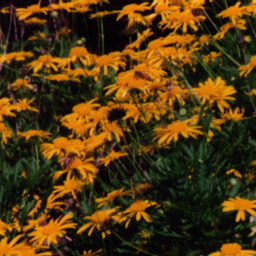}
    &\includegraphics[width=\flen]{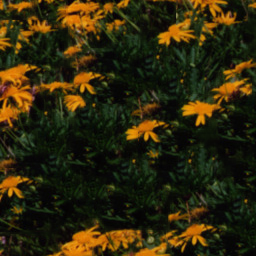}
    &\includegraphics[width=\flen]{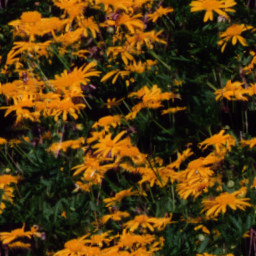}
    &\includegraphics[width=\flen]{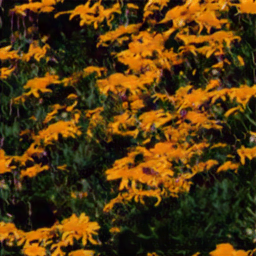}
    &\includegraphics[width=\flen]{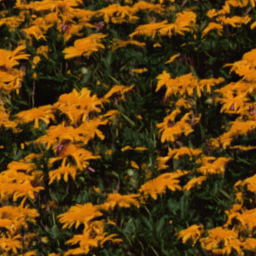}
    \\
    \footnotesize 
      a. Sample %
    & b. TexOptim~\cite{Kwatra}
    & c. OPA~\cite{Gutierrez_ssvm2017}
    & d. SW~\cite{rabin2011wasserstein}
    & e. GOTEX-patch
    \\
\end{tabular}
\caption{Comparison of exemplar-based texture synthesis methods using patch-based image optimization and various optimal transport approximations.
a) displays the exemplar image.
b) shows the texture optimization results from ~\cite{Kwatra} using coarse-to-fine optimization and nearest-neighbor patch assignment.
c) uses instead Optimal Patch Assignment (OPA)~\cite{Gutierrez_ssvm2017}, 
enforcing the target patch distribution. 
d) is based on the proposed loss function \eqref{eq:OTpatchloss} where OT is approximated by Sliced Wasserstein (SW) cost.
e) is the proposed semi-discrete approach, solving more accurately the OT problem.
See the text for more details.
}
\label{fig:patch_comp}
\end{figure}

\subsubsection{Comparison with VGG-based optimization}
\label{ssec:comp_VGG}

Figure \ref{fig:comp_VGG} compares the synthesized results from methods based on VGG features exclusively. 
As reported in~\cite{heitz2021sliced} and illustrated here,
using Sliced-Wasserstein gives slightly better results than the original approach based on Gram matrices regarding color consistency, even when using the post-processing based on color histogram matching advocated in~\cite{gatys_texture_2015}.
Our GOTEX-VGG approach produces decent results, with the same color inconsistencies than the one observed with the method of Gatys et al.~\cite{gatys_texture_2015}. As discussed in section~\ref{ssec:comp_loss}, this issue can be solved with the GOTEX-mix loss \eqref{eq:OTmixedloss} that constrains both VGG and patch distributions.

\begin{figure}[!ht]
\setlength{\flen}{0.24\linewidth}
\centering
\setlength{\tabcolsep}{2pt}
\begin{tabular}{cccc}
\includegraphics[width=\flen]{ground1013_small_256.png}
&\includegraphics[width=\flen]{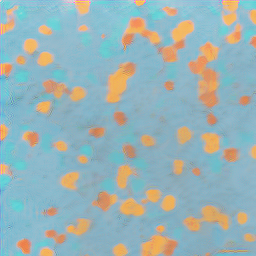}
&\includegraphics[width=\flen]{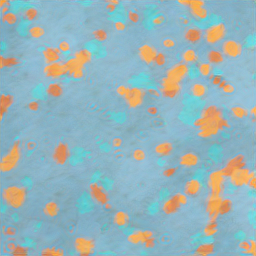}
&\includegraphics[width=\flen]{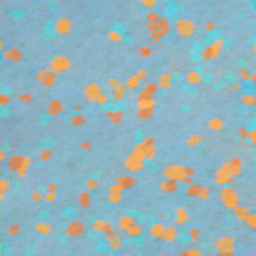}
\\
\includegraphics[width=\flen]{raad3_256.png}
&\includegraphics[width=\flen]{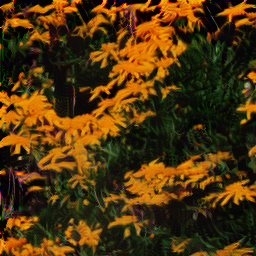}
&\includegraphics[width=\flen]{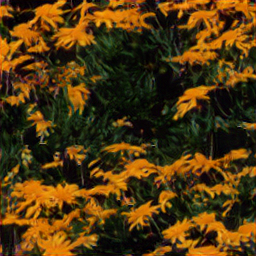}
&\includegraphics[width=\flen]{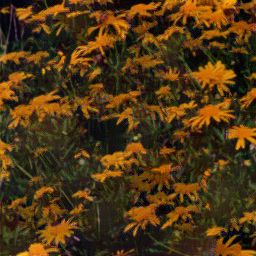}%
\\
  \footnotesize Sample %
& \footnotesize Gram-VGG~\cite{gatys_texture_2015}
& \footnotesize SW-VGG~\cite{heitz2021sliced}
& \footnotesize GOTEX-VGG 
\\
\end{tabular}
\caption{As in Figure \ref{fig:patch_comp}, we compare the results of the proposed Algorithm \ref{alg:MStexgenGD} (GOTEX-VGG) for image optimization based on VGG (already exposed in Fig.~\ref{fig:compfeature}), as pioneered by~\cite{gatys_texture_2015} (Gram-VGG) and with the Sliced Wasserstein approximation (SW-VGG) studied in~\cite{heitz2021sliced}.\label{fig:comp_VGG}}
\end{figure}

\subsection{Other applications}\label{ssec:other_app}

We finally present adaptations of the GOTEX framework to tackle two related synthesis problems: texture inpainting and texture interpolation.

\subsubsection{Texture inpainting}\label{sec:inpainting}

The framework using the OT-patch loss can be extended to texture inpainting, by taking the patches outside a masked area as the target ones. By optimizing only the pixels within the masked area, the very same algorithm yields an efficient texture inpainting method, as illustrated in Figure~\ref{fig:inpainting}.
Since the comparison of patch distributions with OT allows to capture highly non-Gaussian behavior, this texture inpainting scheme can treat highly structured textures, as opposed to the Gaussian model of~\cite{galerne2017texture} which can only deal with unstructured textures.
Besides, the fact that the optimization problem can be naturally restricted to the masked area permits, again, to inherently solve patch aggregation problems.
In contrast to the algorithm of~\cite{leclaire2021jmiv} based on an ad-hoc aggregation technique at the border of the mask, our inpainting results do not suffer from the same blur artifacts on this region, as can be seen on Fig.~\ref{fig:inpainting}.

\begin{figure}[!ht]
\setlength{\flen}{0.12\linewidth}
\vskip 0.in
\begin{center}
\includegraphics[height=\flen]{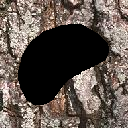}
\includegraphics[height=\flen]{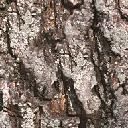}
\includegraphics[height=\flen]{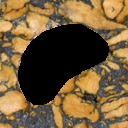} 
\includegraphics[height=\flen]{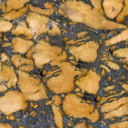} 
\includegraphics[height=\flen]{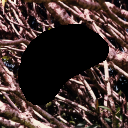}
\includegraphics[height=\flen]{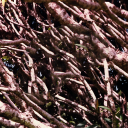}
\includegraphics[height=\flen]{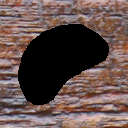}
\includegraphics[height=\flen]{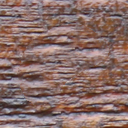} 

\vskip1pt

\includegraphics[height=\flen]{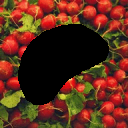}
\includegraphics[height=\flen]{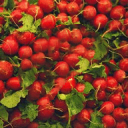}
\includegraphics[height=\flen]{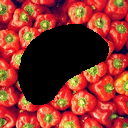} 
\includegraphics[height=\flen]{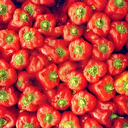} 
\includegraphics[height=\flen]{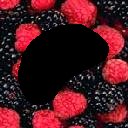}
\includegraphics[height=\flen]{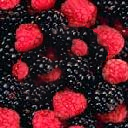} 
\includegraphics[height=\flen]{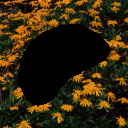}
\includegraphics[height=\flen]{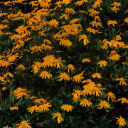}

\vskip1pt

\includegraphics[height=\flen]{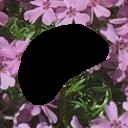}
\includegraphics[height=\flen]{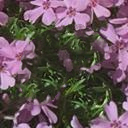}
\includegraphics[height=\flen]{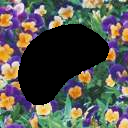}
\includegraphics[height=\flen]{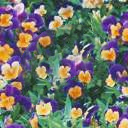}
\includegraphics[height=\flen]{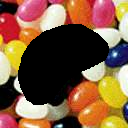} 
\includegraphics[height=\flen]{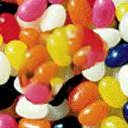} 
\includegraphics[height=\flen]{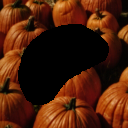}
\includegraphics[height=\flen]{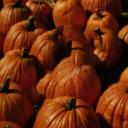}

\vskip1pt 

\includegraphics[height=\flen]{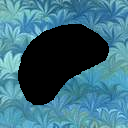}
\includegraphics[height=\flen]{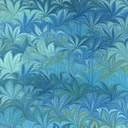}
\includegraphics[height=\flen]{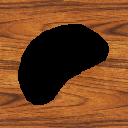}
\includegraphics[height=\flen]{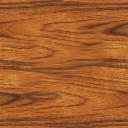}
\includegraphics[height=\flen]{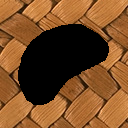}
\includegraphics[height=\flen]{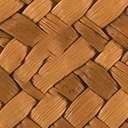}
\includegraphics[height=\flen]{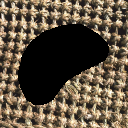}
\includegraphics[height=\flen]{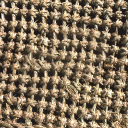}

\vskip1pt 

\begin{minipage}[t]{\flen}
\mysmall{Masked}
\end{minipage} 
\begin{minipage}[t]{\flen}
\mysmall{Inpainted}
\end{minipage}
\begin{minipage}[t]{\flen}
\mysmall{Masked}
\end{minipage} 
\begin{minipage}[t]{\flen}
\mysmall{Inpainted}
\end{minipage}
\begin{minipage}[t]{\flen}
\mysmall{Masked}
\end{minipage} 
\begin{minipage}[t]{\flen}
\mysmall{Inpainted}
\end{minipage}
\begin{minipage}[t]{\flen}
\mysmall{Masked}
\end{minipage} 
\begin{minipage}[t]{\flen}
\mysmall{Inpainted}
\end{minipage}
\caption{Texture inpainting on various masked textures of size $128\times 128$ with $s=4$ and $L=3$ using a slightly adapted version of Alg.~\ref{alg:MStexgenGD}. The method is able to fill properly the masked region while agreeing quite convincingly with the surrounding content around the mask boundary.}
\label{fig:inpainting}
\end{center}
\vskip -0.1in
\end{figure}

\subsubsection{Texture barycenters}\label{sec:tex_mixing}

We now tackle the problem of interpolating between different exemplar textures. To that end, we use the GOTEX barycentric loss \eqref{eq:bar_loss} presented in section \ref{sec:bary}. For $K=2$, it corresponds to finding a new texture $\theta$ whose feature distribution is a Wasserstein interpolation (in the sense of~\eqref{eq:bar_loss}) of the feature distributions extracted from the two exemplar textures $u_0$ and $u_1$. We provide in Fig.~\ref{fig:barycenters2} the interpolation results $\theta_t$ between various textures $u_0$ and $u_1$, for both patchs and VGG-19 features. In these experiments, the barycenter $\theta_t$ corresponds to the solution of \eqref{eq:bar_loss} for different weights  $\alpha_0=1-\alpha_1=t\in\{0.2,\cdots,0.8\}$. We also compare these results to the framework of~\cite{gatys_texture_2015} that realizes  texture interpolation by minimizing the weighted sum of Gram losses \eqref{eq:gram_loss} of VGG features. Since the distance between Gram matrices of features does not represent a distance between feature distributions, this method produces textures with distinct spatial parts that either belong to one or to the other texture $u_0$ and $u_1$. On the other hand, our approach  properly interpolates between texture contents.

\newlength{\bary}
\setlength{\bary}{0.09\linewidth}
\newcommand{\sidecapt}[1]{ {\begin{sideways}\parbox{1.05\bary}{\small #1}\end{sideways}}}

\newcommand{\expeinterp}{
\begin{minipage}{0.14\linewidth}\vspace{-0.25cm}
\centering\includegraphics[width=\bary]{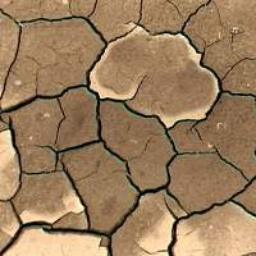} %
\end{minipage}
\begin{minipage}{0.55\linewidth}
\begin{minipage}[b]{.5\bary}
\centering
\sidecapt{\centering GOTEX\\Patches}
\end{minipage}
\foreach \x in {0.2,0.4,0.5,0.6,0.8}%
{\includegraphics[width=\bary]{patch_barycenter_\nameA_resized256_\x_\nameB_resized256_final.png} }\\%
\begin{minipage}[b]{.5\bary}
\centering
\sidecapt{\centering GOTEX\\VGG}
\end{minipage}
\foreach \x in {0.2,0.4,0.5,0.6,0.8}
{\includegraphics[width=\bary]{barycenter_\nameA_resized256_\x_\nameB_resized256_final.png} }\\
\begin{minipage}[b]{.5\bary}
\centering
\sidecapt{\centering Gram\\VGG}
\end{minipage}
\foreach \x in {0.2,0.4,0.5,0.6,0.8}
{\includegraphics[width=\bary]{gram_barycenter_\nameA_resized256_\x_\nameB_resized256_final.png} }\\
\end{minipage}
\begin{minipage}{0.14\linewidth}\vspace{-0.25cm}
\centering\includegraphics[width=\bary]{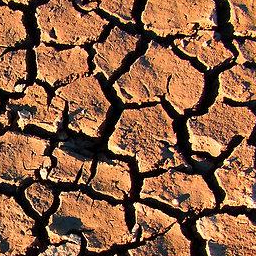}
\end{minipage}\vspace{-0.25cm}
}

\begin{figure}[p]
\centering
\def\nameA{cracked_0162}\def\nameB{cracks}
\expeinterp

\def\nameA{veined_0142}\def\nameB{veined_0066}
\expeinterp

\def\nameA{Shair4}\def\nameB{Shair3}
\expeinterp

\def\nameA{striped_0069}\def\nameB{striped_0085}
\expeinterp

\begin{minipage}{0.14\linewidth}
\centering$u_0$
\end{minipage} 
\begin{minipage}{0.55\linewidth}
{\begin{minipage}{.5\bary}
\centering
\end{minipage} }
\foreach \x in {0.2,0.4,0.5,0.6,0.8}
{\begin{minipage}{\bary}
\hfill
$\theta_{\x}$
\end{minipage} }
\end{minipage}
\begin{minipage}{0.14\linewidth}
\centering$u_1$
\end{minipage} 

\caption{Interpolations $\theta_t$ between exemplar textures $u_0$ and $u_1$, obtained by minimizing the barycentric GOTEX loss \eqref{eq:bar_loss} with $\alpha_0=1-\alpha_1=t\in\{0.2,\cdots ,0.8\}$ for either patches or VGG features. Barycenters computed with respect to Gram losses \eqref{eq:gram_loss} of VGG features, as in~\cite{gatys_texture_2015}, are presented for comparison. The approach of~\cite{gatys_texture_2015} just copies and pastes different parts of the two exemplar textures, whereas the GOTEX framework realizes visually plausible texture interpolations for any kind of involved feature.}
\label{fig:barycenters2}

\end{figure}

%
\section{Generative model}\label{sec:generator}
%

In this section, we consider the problem of training a network to generate images that have prescribed feature distributions at multiple scales. To do so, we use our framework with the generative model $g_\theta$ being the feed-forward deep convolutional network introduced in~\cite{ulyanov2016texture} for texture generation. Then we present some visual results together with a comparison with existing methods. 

Estimating a generative model corresponds to a semi-discrete problem, where one to be able to sample a continuous distribution $g_\theta\sharp\zeta$ which feature distribution $\mu_\theta = \frac{1}{n}\sum_{i=1}^n \left(F_i\circ g_\theta\right)\sharp\zeta$ fits the one of a discrete example distribution $\nu=\sum_{j=1}^m\delta_{y_j}$.
The semi-discrete formulation of optimal transport is thus perfectly adapted to deal with the estimation of a  generative model.

\subsection{Experimental setting}

\paragraph{Neural network architecture}
We consider the feed-forward texture synthesis model introduced in~\cite{ulyanov2016texture} that we refer to as \emph{TexNet}. This model takes as input a latent variable $z = \{z_0,\ldots,z_M\}$ constituted of $M+1$ inputs $z_l$ of size $\propto 2^l$ (with $M=4$ as advocated in~\cite{ulyanov2016texture} for texture synthesis). 
The first input $z_0$ passes through a convolutional block and a block of upsampling by a factor $2$, then the result is concatenated with $z_1$ and passes again through a convolutional and upsamling block and so on. For the detailed architecture, we refer to the Figure~2 of~\cite{ulyanov2016texture}. 

One of the key advantage of this network in comparison to other methods is its small number of parameters (around 65K), enabling fast convergence during optimization on a single image.
It has been originally designed to synthesize textures by minimizing the Gram-VGG loss introduced in~\cite{gatys_texture_2015}. 
We next demonstrate that the parameters of such a generative network can be learned with the GOTEX algorithm, \emph{i.e.} by only enforcing the feature distributions at various scales.

\paragraph{Optimization}
In our PyTorch implementation, we use this time the Adam  optimizer~\cite{kingma2014adam} to estimate the parameters $\theta$. For the GOTEX-patch algorithm, $10000$ iterations have been used with a learning-rate $\eta_{\theta} = 0.01$. An averaged stochastic gradient ascent with $100$ inner iterations is used for computing $\psi^*$.
In total, one million $4\times4$ patches at $\LL = 4$ different scales are therefore sampled to train the neural network.
For GOTEX-VGG, $10000$ images are sampled during training (batch of 1) with the same optimizer.
In these setting, 10 hours are required to train each generator with a GPU Nvidia K40m.

\paragraph{Comparared methods}
For experiments, Pytorch implementations of SinGAN~\footnote{\url{github.com/tamarott/SinGAN}}, 
PSGAN~\footnote{\url{github.com/zalandoresearch/famos}} 
and TexNet~\footnote{\url{github.com/JorgeGtz/TextureNets_implementation}} were run with their default parameters.
Regarding TexTo~\cite{leclaire2021jmiv}, a similar experimental setting has been used, with $10^7$ iterations of ASGD and $7\times7$ patches at $4$ different scales. Note that a pre-processing step is required to perform the bi-level clustering of the target patch distribution.

The Sliced Wasserstein loss used to approximate optimal transport within our framework is optimized using $5000$ iterations of the Adam optimizer. At each step, all the $4 \times 4$ patches are extracted from a $256 \times 256$ synthesized texture (i.e. a batch of 1 image).
The loss itself is based on $d$ random directions in addition to the canonical basis, where $d=48$ is the patch dimension.

\subsection{Experimental results and discussion}

Figures~\ref{fig:CNNgen_big_comp} and~\ref{fig:CNNgen_big_comp2} display four synthesized textures with the proposed GOTEX framework and five relevant synthesis methods from the literature based on patches or deep features.

\paragraph{Patch representation}
We first compare GOTEX-patch (Fig.~\ref{fig:CNNgen_big_comp}.b) with the Sliced Wasserstein approximation (Fig.~\ref{fig:CNNgen_big_comp}.c) discussed in the previous section. 
Since the SW cost is computed at each iteration, the optimization problem now requires random sampling of directions for patch projections in addition to the image sampling itself. The same architecture has been used to train both network. 
While the results obtained with GOTEX tends to be slightly over-smoothed, it is much more noisier with SW. 
This is particularly noticeable in the results of columns 2 and 3 where the original texture is regular and the syntheses with SW contain high frequencies artifacts.

\begin{figure}[!ht]
\setlength{\flen}{0.09\linewidth}
\newlength{\halflen}
\setlength{\halflen}{0.027\linewidth}
\newcommand{\sidecap}[1]{ {\begin{sideways}\parbox{2\flen}{\centering #1}\end{sideways}} }
\newcommand{\sidecapO}[1]{ {\begin{sideways}\parbox{1.2\flen}{\centering #1}\end{sideways}} }
\centering
\setlength{\tabcolsep}{2pt}
\newcommand\Tstrut{\rule{0pt}{2.6ex}}         
\newcommand\Bstrut{\rule[-0.9ex]{0pt}{0pt}}   
\begin{center}
\begin{tabular}{c c cccc}
\sidecapO{a. Original}
&
&\raisebox{\halflen}{\includegraphics[height=\flen]{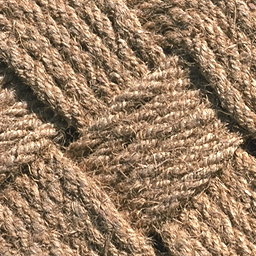}}~%
&\raisebox{\halflen}{\includegraphics[height=\flen]{ground1013_small_256.png}}%
&\raisebox{\halflen}{\includegraphics[height=\flen]{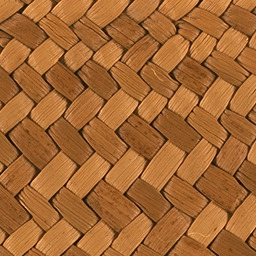}}%
&\raisebox{\halflen}{\includegraphics[height=\flen]{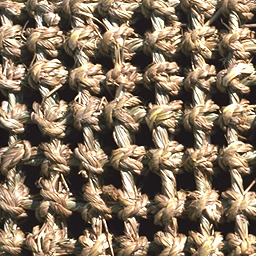}}
\\
\sidecap{b. GOTEX}
&\sidecap{(patch)}
&\includegraphics[height=2\flen]{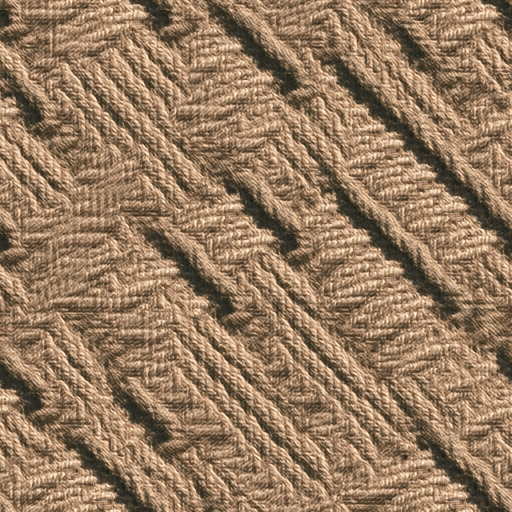}%
&\includegraphics[height=2\flen]{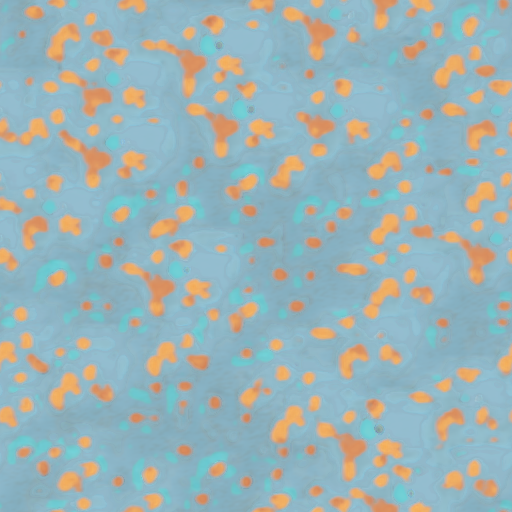}%
&\includegraphics[height=2\flen]{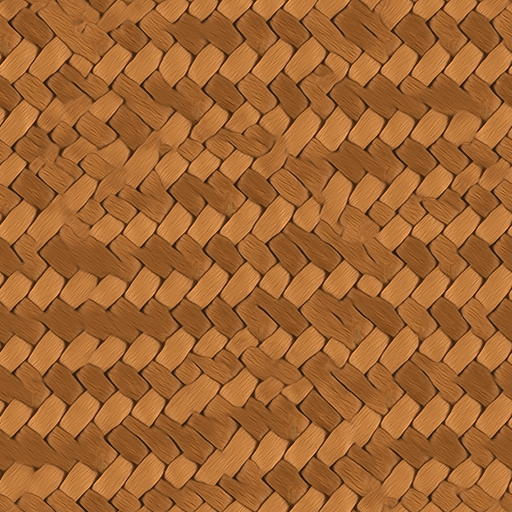}%
&\includegraphics[height=2\flen]{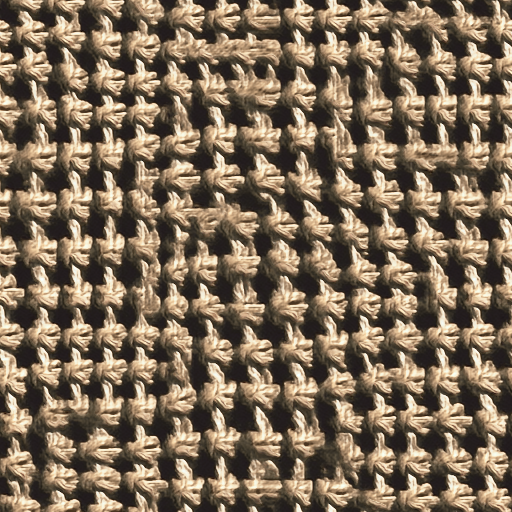}
\\
\sidecap{c. SW}
&\sidecap{(patch)}
&\includegraphics[height=2\flen]{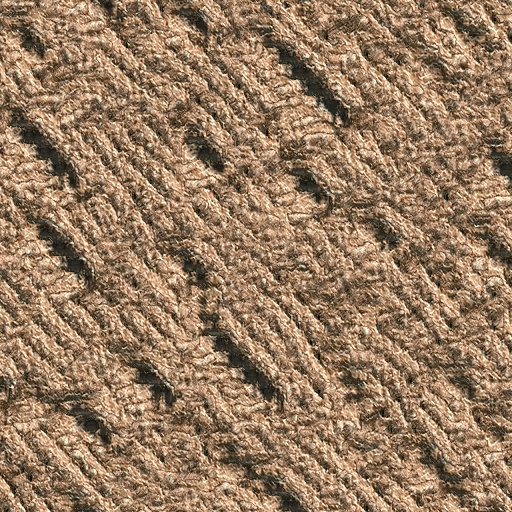} 
&\includegraphics[height=2\flen]{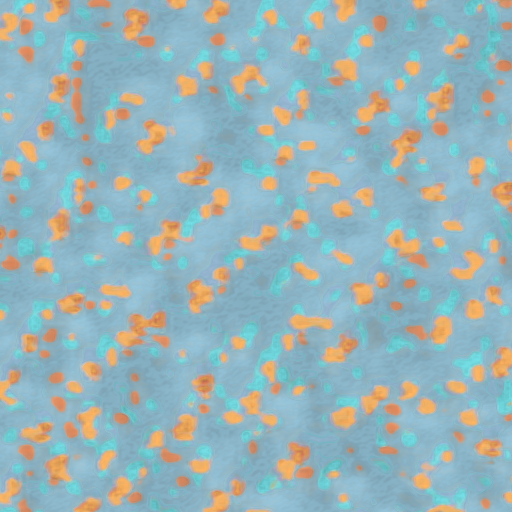} 
&\includegraphics[height=2\flen]{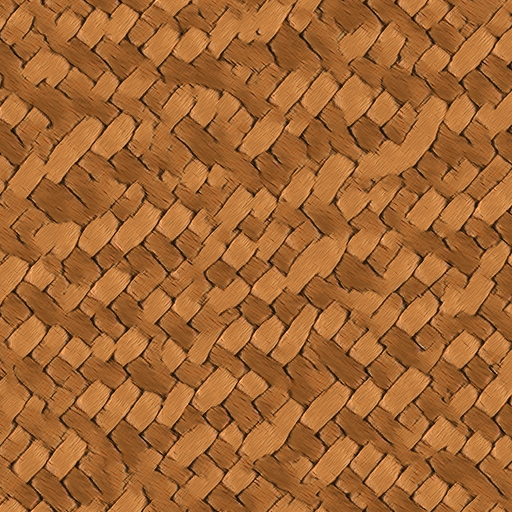} 
&\includegraphics[height=2\flen]{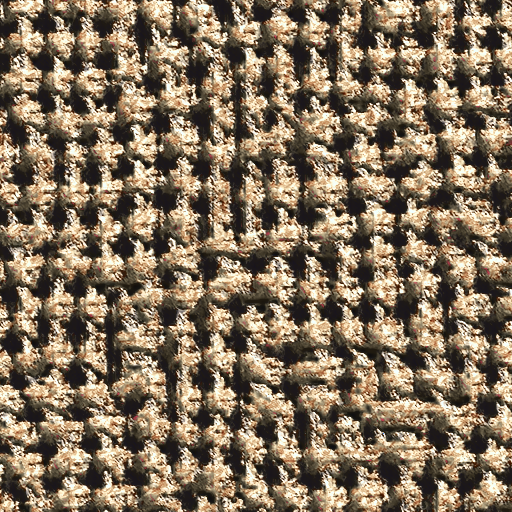}  
\\
\sidecap{d. TexTo}
&\sidecap{(patch)}
&\includegraphics[height=2\flen]{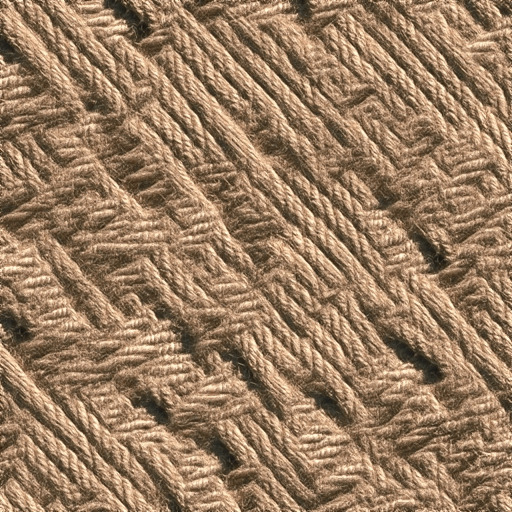} 
&\includegraphics[height=2\flen]{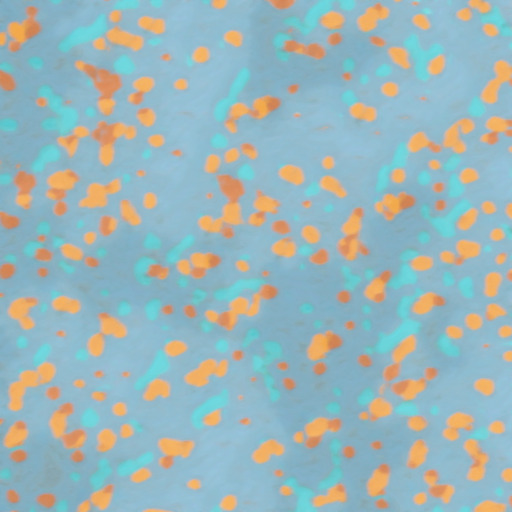} 
&\includegraphics[height=2\flen]{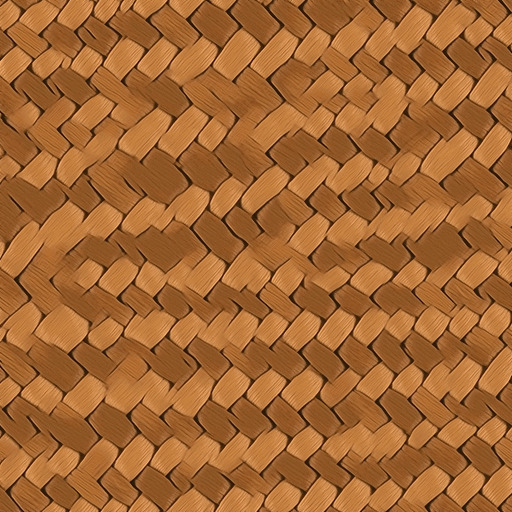} 
&\includegraphics[height=2\flen]{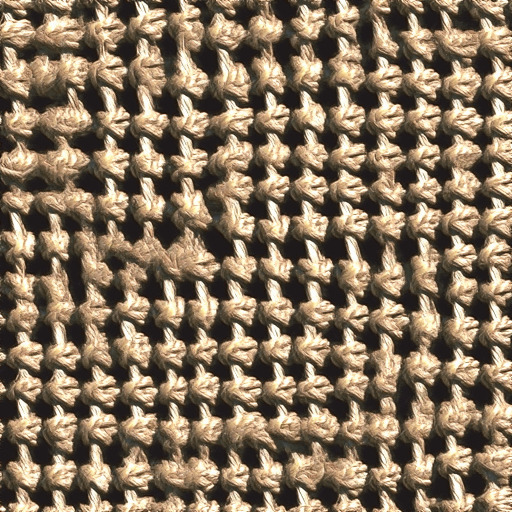}
\end{tabular}
\end{center}
\caption{Texture synthesis from a generative neural network trained on a single $256\times256$ sample (a). 
Our GOTEX-patch multi-scale approach (b) using $4\times4$ patches (see Alg.~\ref{alg:MStexgenGD}) is first compared with 
(c) multi-scale Sliced Wasserstein (SW) approximation, and 
(d) TexTo~\cite{leclaire2021jmiv}, using an explicit network of multi-layer transportation maps on $7\times7$ patches rather than a convolutional network.
}
\label{fig:CNNgen_big_comp}
\end{figure}

We also compare to TexTo, the multi-layer OT network (Fig.~\ref{fig:CNNgen_big_comp}.d) proposed in~\cite{leclaire2021jmiv} where the convolutional network is replaced by approximated multi-level optimal transport maps on $7\times 7$ patches that are averaged to synthesize an image.
A major difference with the proposed method is that GOTEX solves a global optimization problem that tackles the multiscale patch distributions in a direct manner.
On visual inspection, results are fairly similar, even if the proposed method does not rely on any coarse to fine optimization nor multi-level approximation. 
The patch aggregation step from TexTo may yield more blur than using a generative network which inherently deals with this issue.
This is supported by the in-depth comparison provided in~\cite{Houdard_ssvm21} based on various similarity metrics.

\begin{figure}[tp]
\setlength{\flen}{0.08\linewidth}
\setlength{\halflen}{0.025\linewidth}
\newcommand{\sidecap}[1]{ {\begin{sideways}\parbox{2\flen}{\centering #1}\end{sideways}} }
\newcommand{\sidecapO}[1]{ {\begin{sideways}\parbox{1.2\flen}{\centering #1}\end{sideways}} }
\centering
\setlength{\tabcolsep}{2pt}
\newcommand\Tstrut{\rule{0pt}{2.6ex}}         
\newcommand\Bstrut{\rule[-0.9ex]{0pt}{0pt}}   
\begin{center}
\begin{tabular}{c c cccc}
\sidecapO{a. Original}
&
&\raisebox{\halflen}{\includegraphics[height=\flen]{cordes_256.png}}~%
&\raisebox{\halflen}{\includegraphics[height=\flen]{ground1013_small_256.png}}%
&\raisebox{\halflen}{\includegraphics[height=\flen]{raad1_256.png}}%
&\raisebox{\halflen}{\includegraphics[height=\flen]{raad8_256.png}}
\\

    \sidecap{b. GOTEX}
    &\sidecap{(mix)}
    &\includegraphics[height=2\flen]{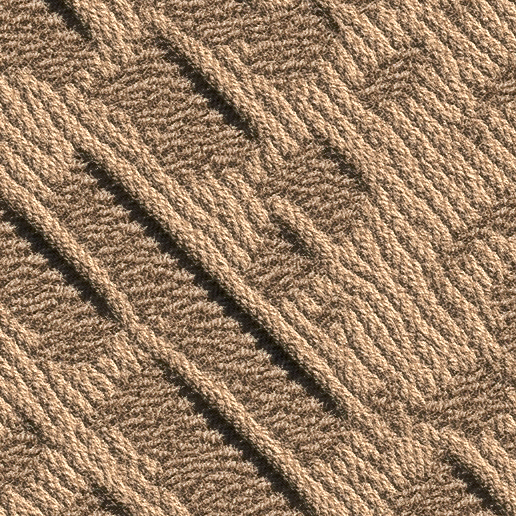}%
    &\includegraphics[height=2\flen]{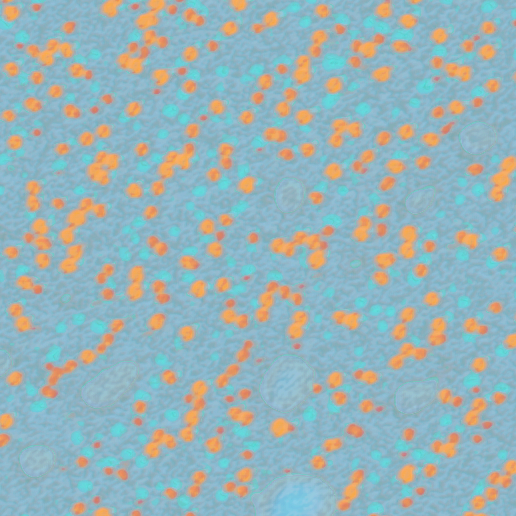}%
    &\includegraphics[height=2\flen]{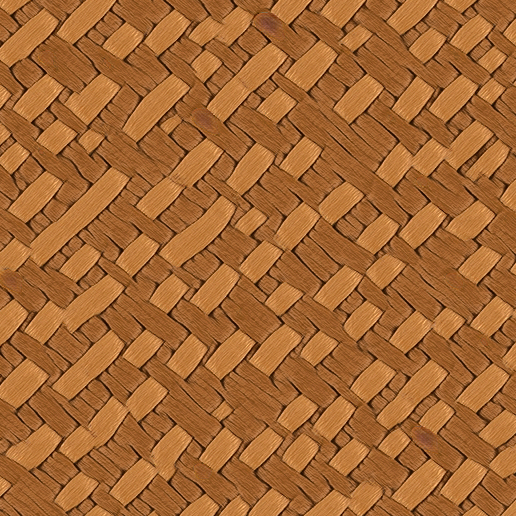}%
    &\includegraphics[height=2\flen]{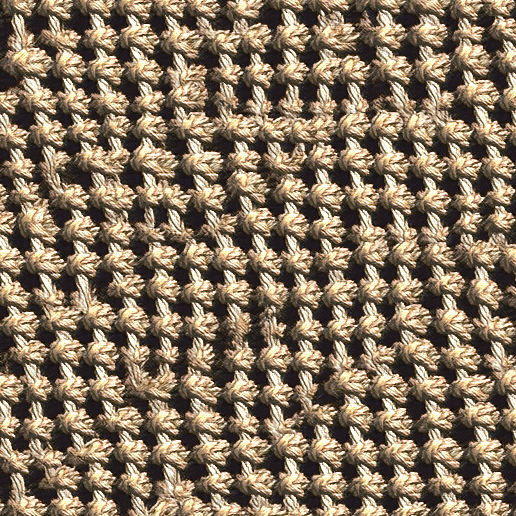}
    \\
    \sidecap{c. GOTEX}
    &\sidecap{(VGG)}
    &\includegraphics[height=2\flen]{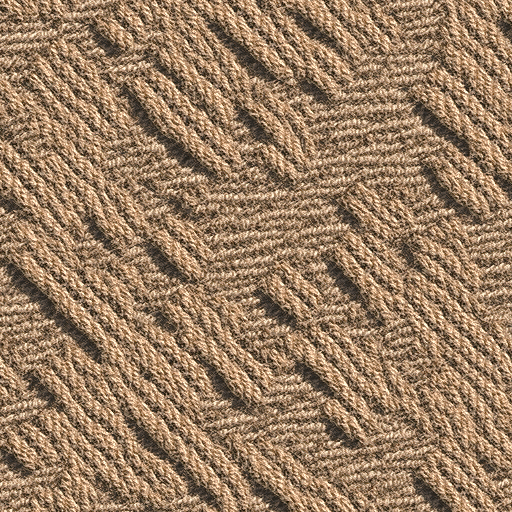}%
    &\includegraphics[height=2\flen]{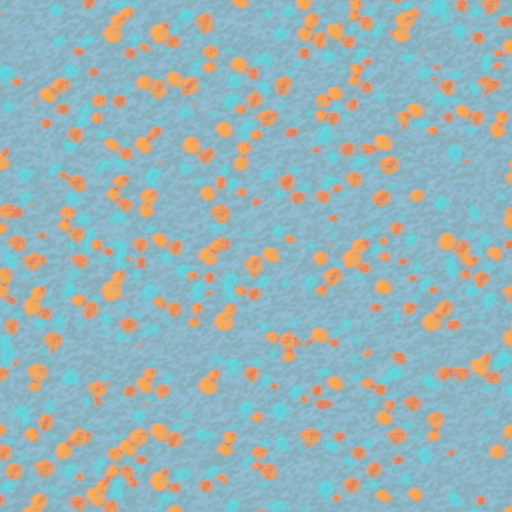}%
    &\includegraphics[height=2\flen]{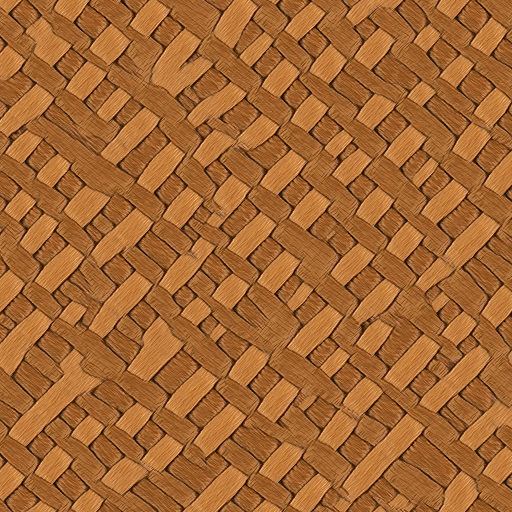}%
    &\includegraphics[height=2\flen]{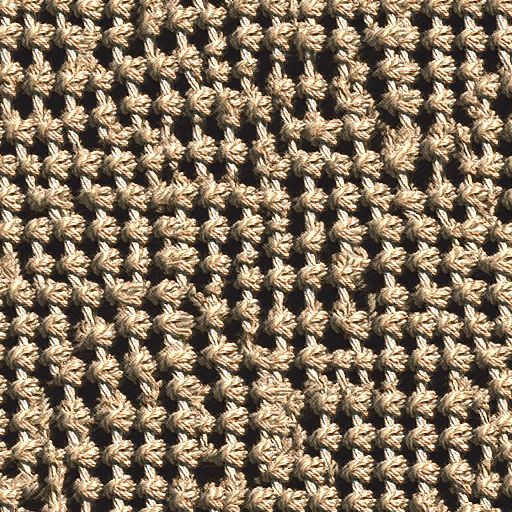}

\\
\sidecap{d. SW}
&\sidecap{(VGG)} 
& \includegraphics[height=2\flen]{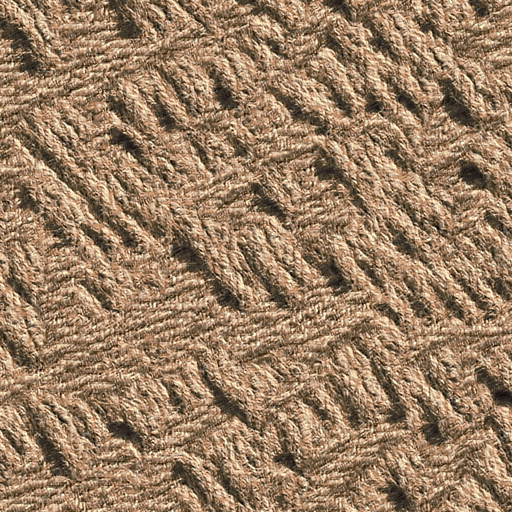}
& \includegraphics[height=2\flen]{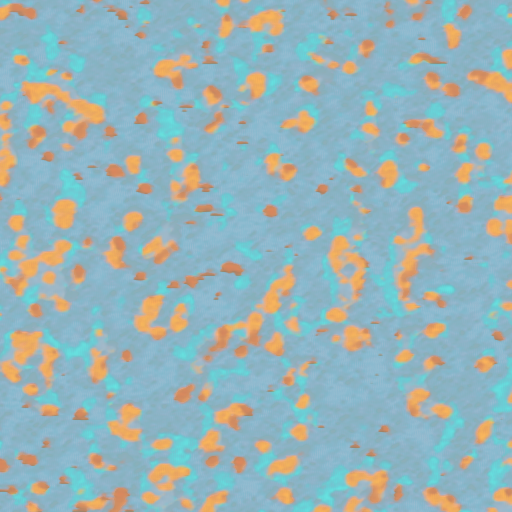}
& \includegraphics[height=2\flen]{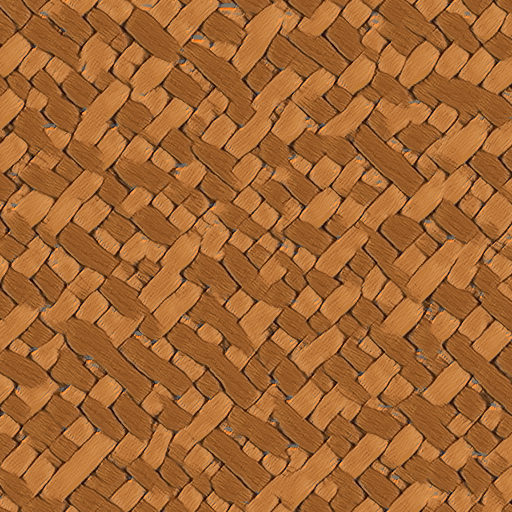}
& \includegraphics[height=2\flen]{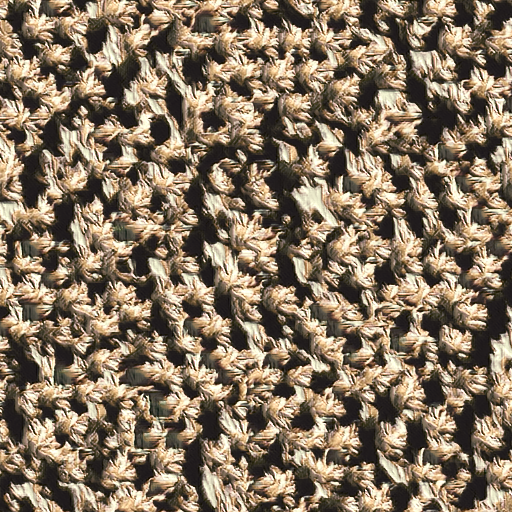}
\\
\sidecap{e. TexNet}
&\sidecap{(VGG)}
&\includegraphics[height=2\flen]{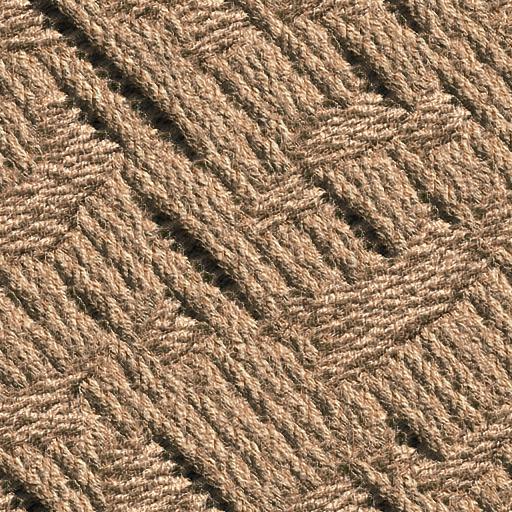}%
&\includegraphics[height=2\flen]{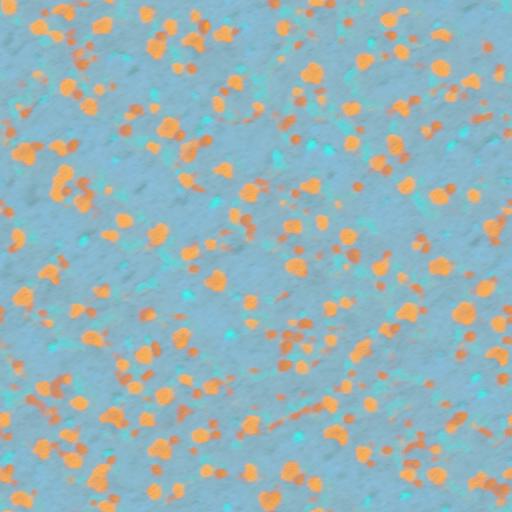}%
&\includegraphics[height=2\flen]{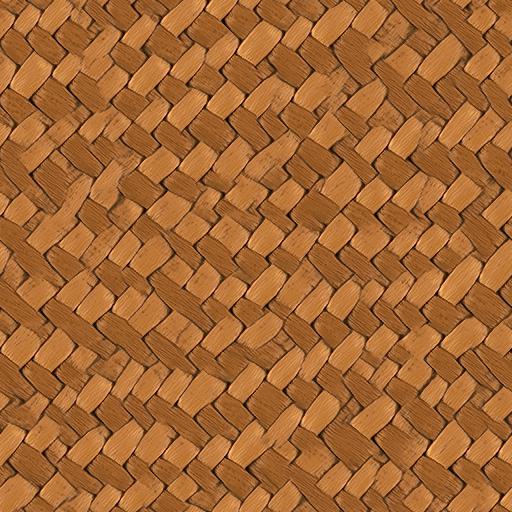}%
&\includegraphics[height=2\flen]{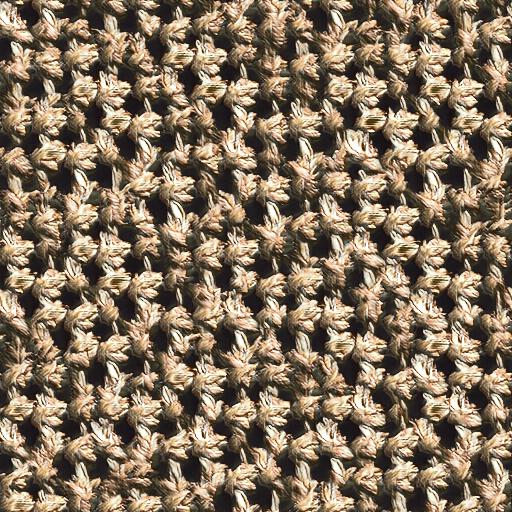}
\\
\sidecap{f. SinGAN}
&\sidecap{(patch + Adv.)}
&\includegraphics[height=2\flen]{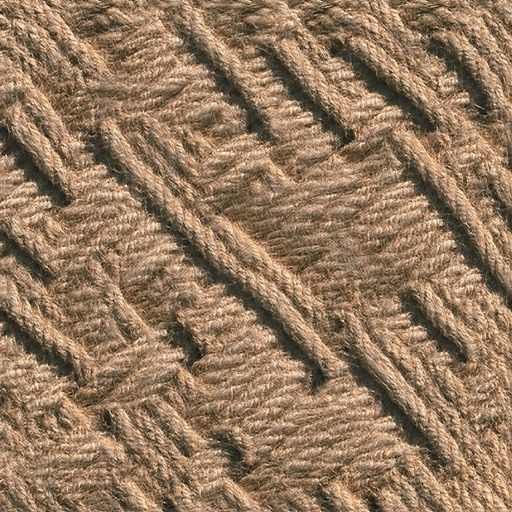}%
&\includegraphics[height=2\flen]{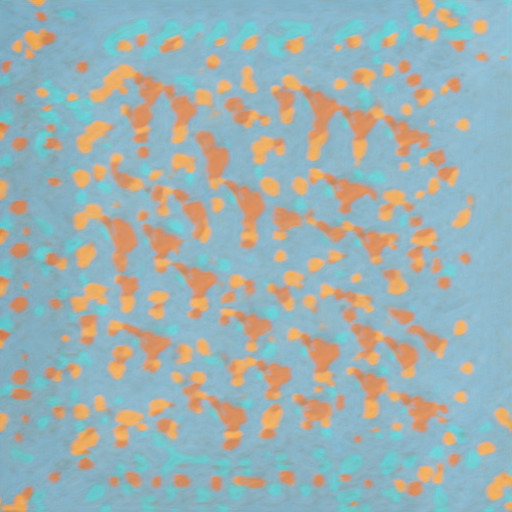}%
&\includegraphics[height=2\flen]{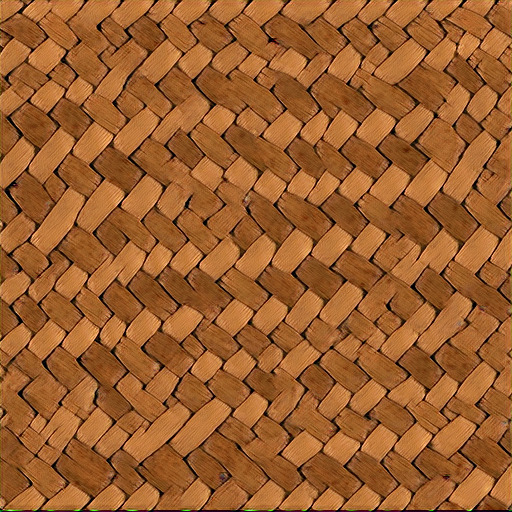}%
&\includegraphics[height=2\flen]{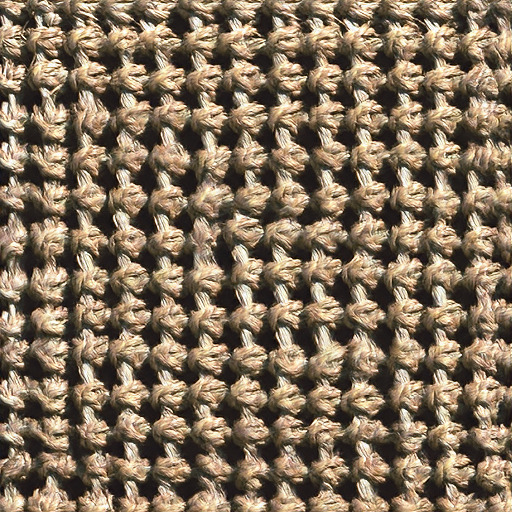}\
\\
\sidecap{g. PSGAN} 
&\sidecap{(patch + Adv.)}
&\includegraphics[height=2\flen]{PSGAN_cordes_crop}%
&\includegraphics[height=2\flen]{PSGAN_ground1013_crop}%
&\includegraphics[height=2\flen]{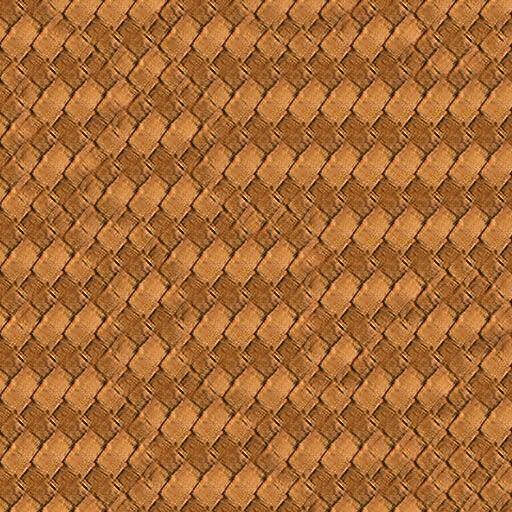}%
&\includegraphics[height=2\flen]{PSGAN_raad8_crop}
\\

\end{tabular}
\end{center}

\caption{Texture synthesis from a generative neural network trained on a single $256\times256$ sample (a). 
Methods 
(b) GOTEX-mix,
(c) GOTEX-VGG, 
(d) SW-VGG and
(e) TexNet~\cite{ulyanov2016texture} use pretrained VGG features.
Deep neural features are used for 
(f) SinGAN~\cite{shaham2019singan} and 
(g) PSGAN~\cite{bergmann2017learning}, where an adversarial network is trained on patches.
}
\label{fig:CNNgen_big_comp2}
\end{figure}

\paragraph{Perceptual representation with VGG}

The comparison is now carried on for VGG features, including 
the proposed GOTEX-mix model (Fig.~\ref{fig:CNNgen_big_comp2}.b), the GOTEX-VGG model (Fig.~\ref{fig:CNNgen_big_comp2}.c),
its approximation using Sliced Wasserstein loss (SW-VGG in Fig.~\ref{fig:CNNgen_big_comp2}.d) that was also recently studied in~\cite{heitz2021sliced},
and the original Texture Networks from~\cite{ulyanov2016texture} (TexNet in Fig.~\ref{fig:CNNgen_big_comp2}.e).

To begin with the latter, one can observe that the TexNet architecture creates pseudo-periodic patterns that are not visible in the original texture, as for instance in the second and fourth examples of Fig.~\ref{fig:CNNgen_big_comp2}.e.
This limitation is already reported in~\cite{ulyanov2016texture} and has been linked to overfitting with VGG features.
This effect seems to be mitigated when using optimal transport optimization.

Now, as for the image-based optimization results detailed in the previous section, it is interesting to observe that patch-based synthesis can be close to synthesis obtained when using more sophisticated features relying on deep neural networks (comparing for instance Fig.~\ref{fig:CNNgen_big_comp}.b to Fig.~\ref{fig:CNNgen_big_comp2}.c).
This is also true when considering Sliced Wasserstein approximation (Fig.~\ref{fig:CNNgen_big_comp}.c vs Fig.~\ref{fig:CNNgen_big_comp2}.d).
More precisely, multi-scale patch distributions are more effective to capture long range patterns, while VGG features manage to synthesize photo-realistic high-resolution details.

We used the histogram equalization technique discussed previously for GOTEX-VGG to avoid color inconsistencies.
The benefit of mixing features is here less striking than for single image optimization shown in Section~\ref{ssec:comp_loss}, as the generated samples are very close to the ones obtained by using only VGG features. 
Besides, the stochastic SW approximation results in noticeable decrease in synthesis quality, regarding artifacts (blur, color inconsistencies and high frequency artifacts) and long range correlations.

\paragraph{Adversarial techniques}
For the sake of completeness, we also present results from adversarial techniques that  simultaneously train a discriminative network on generated patches. 
The first method is SinGAN~\cite{shaham2019singan} (Fig.~\ref{fig:CNNgen_big_comp2}.f), a recent generative adversarial network (GAN) technique generating images from a single example and relying on patch sampling.
Note that this method, mainly focused on image reshuffling, learns implicitly during training to copy the border of the example texture by combining an adversarial loss with a least square criterion.
This effect can still be noticed when generating a larger sample, especially in the corners.
The second method is PSGAN~\cite{bergmann2017learning} (Fig.~\ref{fig:CNNgen_big_comp2}.g), a previous approach that similarly adapts the GAN framework to the training of a single image, purely based on an adversarial loss.
As shown here, GAN struggles to train on small samples (here $256 \times 256$ pixels), which results in a lot of noticeable high frequency artifacts. Such high-frequency artifacts with GANs were already reported in~\cite{raad2018survey}.

%
\section{Conclusion and discussion}

In this work, we proposed a general framework for texture synthesis by optimization that allows to constrain the distributions of high-dimensional features through the use of optimal transport distances.
The main contribution of this work is to exploit the semi-dual formulation of optimal transport in order to get a min-max problem with an inner concave maximization problem.
This min-max problem can be solved with an efficient alternate algorithm.
Besides, we provided explicit formulae for the gradients of this functional, which are useful to study and interpret the optimization algorithm.
Contrary to previous methods based on approximations of the optimal transport (like the Sliced Wasserstein distance), the stochastic algorithm used here for the inner problem is guaranteed to converge towards the true optimal transport cost, which makes the global learning process more accurate.
Another interest of this framework is that it is adapted to the learning of a generative model.
As we have seen, such a formulation encompasses the case where one wishes to generate a single image (by optimizing directly on the pixel values) and the case where one wishes to learn a convolutional neural network that can later serve for on-the-fly synthesis.
Experiments showed that both cases lead to high-quality synthesized textures.
Since this method can naturally deal with various sets of features, we were able to compare synthesis results obtained by using patches or features extracted from a pre-learned neural network.
This comparison showed that using multiscale patch distributions is sufficient to synthesize a wide class of textures, while avoiding low-level artifacts (e.g. drifts in the color distribution).
Finally, the image-based optimization can be easily adapted to other applications, and we provided successful results of texture inpainting and texture interpolation.

This work also raises several questions, both on the theoretical and practical aspects.
While the convergence of the inner maximization algorithm is proved, the convergence of the alternate algorithm (which we observed empirically) remains to be investigated.
In particular, it would be interesting to see if one can justify the use of the algorithm alternating single gradient steps on each variable.
On the practical side, one may try to adapt the proposed framework to more general image synthesis problems where the database is much larger.
Indeed, one of the limitations of this work is that the number of variables in the semi-dual optimal transport problem corresponds to the number of points in the target distribution.
Therefore, in order to deal with a much larger database, one may either resort on a batch strategy to estimate the gradient on $\theta$, or to work with another parameterization of the dual variables (as in~\cite{seguy2018large}).
Finally, the different feature-based losses proposed in this work could be used for quantitative evaluation of texture synthesis methods.
The design of a quality measure for texture synthesis requires a more thorough comparison of the various possible features, and may also be driven by a precise perceptual study in order to wisely combine the chosen features, and to validate the resulting criterion.


\begin{thebibliography}{10}
\providecommand{\url}[1]{\texttt{#1}}
\providecommand{\urlprefix}{URL }
\providecommand{\doi}[1]{https://doi.org/#1}

\bibitem{agueh2011barycenters}
Agueh, M., Carlier, G.: Barycenters in the wasserstein space. SIAM Journal on
  Mathematical Analysis  \textbf{43}(2),  904--924 (2011)

\bibitem{arjovsky2017wgan}
Arjovsky, M., Chintala, S., Bottou, L.: Wasserstein generative adversarial
  networks. In: International Conference on Machine Learning. pp. 214--223
  (2017)

\bibitem{bergmann2017learning}
Bergmann, U., Jetchev, N., Vollgraf, R.: Learning texture manifolds with the
  periodic spatial gan. In: Proceedings of the 34th International Conference on
  Machine Learning-Volume 70. pp. 469--477. JMLR. org (2017)

\bibitem{buades2005nlmeans}
Buades, A., Coll, B., Morel, J.M.: A non-local algorithm for image denoising.
  In: 2005 IEEE Computer Society Conference on Computer Vision and Pattern
  Recognition (CVPR'05). vol.~2, pp. 60--65. IEEE (2005)

\bibitem{keops}
Charlier, B., Feydy, J., Glaun{\`e}s, J., Collin, F.D., Durif, G.: Kernel
  operations on the gpu, with autodiff, without memory overflows. Journal of
  Machine Learning Research  \textbf{22}(74), ~1--6 (2021)

\bibitem{chen2019gradual}
Chen, Y., Telgarsky, M., Zhang, C., Bailey, B., Hsu, D., Peng, J.: A gradual,
  semi-discrete approach to generative network training via explicit
  wasserstein minimization. In: International Conference on Machine Learning.
  pp. 1071--1080. PMLR (2019)

\bibitem{cuturi2013sinkhorn}
Cuturi, M.: Sinkhorn distances: Lightspeed computation of optimal transport.
  Advances in neural information processing systems  \textbf{26},  2292--2300
  (2013)

\bibitem{deng2009imagenet}
Deng, J., Dong, W., Socher, R., Li, L.J., Li, K., Fei-Fei, L.: Imagenet: A
  large-scale hierarchical image database. In: 2009 IEEE conference on computer
  vision and pattern recognition. pp. 248--255. Ieee (2009)

\bibitem{durand2021shallow}
Durand, T., Rabin, J., Tschumperl{\'e}, D.: Shallow multi-scale network for
  stylized super-resolution. In: IEEE International Conference on Image
  Processing (2021)

\bibitem{efros-nonparam-sampling}
Efros, A.A., Leung, T.K.: Texture synthesis by non-parametric sampling. In:
  IEEE International Conference on Computer Vision. p.~1033 (1999)

\bibitem{galerne_rpn_2011}
Galerne, B., Gousseau, Y., Morel, J.M.: Random phase textures: Theory and
  synthesis. IEEE Trans. Image Proc.  \textbf{20}(1),  257 -- 267 (2011)

\bibitem{galerne2017texture}
Galerne, B., Leclaire, A.: Texture inpainting using efficient gaussian
  conditional simulation. SIAM Journal on Imaging Sciences  \textbf{10}(3),
  1446--1474 (2017)

\bibitem{galerne2018texture}
Galerne, B., Leclaire, A., Rabin, J.: A texture synthesis model based on
  semi-discrete optimal transport in patch space. SIAM Journal on Imaging
  Sciences  \textbf{11}(4),  2456--2493 (2018)

\bibitem{gatys_texture_2015}
Gatys, L., Ecker, A.S., Bethge, M.: Texture synthesis using convolutional
  neural networks. In: NIPS. pp. 262--270 (2015)

\bibitem{gatys2017controlling}
Gatys, L.A., Ecker, A.S., Bethge, M., Hertzmann, A., Shechtman, E.: Controlling
  perceptual factors in neural style transfer. In: Proceedings of the IEEE
  Conference on Computer Vision and Pattern Recognition. pp. 3985--3993 (2017)

\bibitem{genevay2016ot}
Genevay, A., Cuturi, M., Peyr{\'e}, G., Bach, F.: Stochastic optimization for
  large-scale optimal transport. In: Advances in neural information processing
  systems. pp. 3440--3448 (2016)

\bibitem{goodfellow2014gan}
Goodfellow, I., Pouget-Abadie, J., Mirza, M., Xu, B., Warde-Farley, D., Ozair,
  S., Courville, A., Bengio, Y.: Generative adversarial nets. In: Advances in
  neural information processing systems. pp. 2672--2680 (2014)

\bibitem{gulrajani2017improved}
Gulrajani, I., Ahmed, F., Arjovsky, M., Dumoulin, V., Courville, A.C.: Improved
  training of wasserstein gans. In: Advances in neural information processing
  systems. pp. 5767--5777 (2017)

\bibitem{Gutierrez_ssvm2017}
Gutierrez, J., Galerne, B., Rabin, J., Hurtut, T.: Optimal patch assignment for
  statistically constrained texture synthesis. In: Scale-Space and Variational
  Methods in Computer Vision (2017)

\bibitem{heegerbergen_1995}
Heeger, D.J., Bergen, J.R.: Pyramid-based texture analysis/synthesis. In:
  Proceedings of the 22nd annual conference on Computer graphics and
  interactive techniques. pp. 229--238. ACM (1995)

\bibitem{heitz2021sliced}
Heitz, E., Vanhoey, K., Chambon, T., Belcour, L.: A sliced wasserstein loss for
  neural texture synthesis. In: Proceedings of the IEEE/CVF Conference on
  Computer Vision and Pattern Recognition. pp. 9412--9420 (2021)

\bibitem{houdard2018high}
Houdard, A., Bouveyron, C., Delon, J.: High-dimensional mixture models for
  unsupervised image denoising (hdmi). SIAM Journal on Imaging Sciences
  \textbf{11}(4),  2815--2846 (2018)

\bibitem{Houdard_ssvm21}
Houdard, A., Leclaire, A., Papadakis, N., Rabin, J.: Wasserstein generative
  models for patch-based texture synthesis. In: Elmoataz, A., Fadili, J.,
  Qu{\'e}au, Y., Rabin, J., Simon, L. (eds.) Scale Space and Variational
  Methods in Computer Vision (SSVM'2. pp. 269--280. Springer International
  Publishing, Cham (2021)

\bibitem{johnson2016perceptual}
Johnson, J., Alahi, A., Fei-Fei, L.: Perceptual losses for real-time style
  transfer and super-resolution. In: European conference on computer vision.
  pp. 694--711. Springer (2016)

\bibitem{karras2019style}
Karras, T., Laine, S., Aila, T.: A style-based generator architecture for
  generative adversarial networks. In: Proceedings of the IEEE Conference on
  Computer Vision and Pattern Recognition. pp. 4401--4410 (2019)

\bibitem{kaspar2015self}
Kaspar, A., Neubert, B., Lischinski, D., Pauly, M., Kopf, J.: Self tuning
  texture optimization. In: Computer Graphics Forum. vol.~34, pp. 349--359
  (2015)

\bibitem{kingma2014adam}
Kingma, D.P., Ba, J.: Adam: A method for stochastic optimization. In: ICLR
  (2014)

\bibitem{kolouri2018sliced}
Kolouri, S., Pope, P.E., Martin, C.E., Rohde, G.K.: Sliced wasserstein
  auto-encoders. In: International Conference on Learning Representations
  (2018)

\bibitem{krizhevsky2012imagenet}
Krizhevsky, A., Sutskever, I., Hinton, G.E.: Imagenet classification with deep
  convolutional neural networks. Advances in neural information processing
  systems  \textbf{25},  1097--1105 (2012)

\bibitem{Kuhn55thehungarian}
Kuhn, H.W., Yaw, B.: The hungarian method for the assignment problem. Naval
  Res. Logist. Quart pp. 83--97 (1955)

\bibitem{Kwatra}
Kwatra, V., Essa, I., Bobick, A., Kwatra, N.: Texture optimization for
  example-based synthesis. In: ACM SIGGRAPH 2005 Papers, pp. 795--802 (2005)

\bibitem{lebrun2013nlbayes}
Lebrun, M., Buades, A., Morel, J.M.: A nonlocal bayesian image denoising
  algorithm. SIAM Journal on Imaging Sciences  \textbf{6}(3),  1665--1688
  (2013)

\bibitem{leclaire_multilayer_ssvm19}
Leclaire, A., Rabin, J.: A fast multi-layer approximation to semi-discrete
  optimal transport. In: International Conference on Scale Space and
  Variational Methods in Computer Vision. pp. 341--353. Springer (2019)

\bibitem{leclaire2021jmiv}
Leclaire, A., Rabin, J.: A stochastic multi-layer algorithm for semi-discrete
  optimal transport with applications to texture synthesis and style transfer.
  Journal of Mathematical Imaging and Vision  \textbf{63}(2),  282--308 (2021)

\bibitem{lbfgs}
Liu, D.C., Nocedal, J.: On the limited memory {BFGS} method for large scale
  optimization. Mathematical programming  \textbf{45}(1),  503--528 (1989)

\bibitem{mescheder2018training}
Mescheder, L., Nowozin, S., Geiger, A.: Which training methods for gans do
  actually converge? In: International Conference on Machine Learning (ICML)
  (2018)

\bibitem{odena2016deconvolution}
Odena, A., Dumoulin, V., Olah, C.: Deconvolution and checkerboard artifacts.
  Distill  \textbf{1}(10) (2016)

\bibitem{portilla_simoncelli_2000}
Portilla, J., Simoncelli, E.: A parametric texture model based on joint
  statistics of complex wavelet coefficients. International Journal of Computer
  Vision  \textbf{40}(1),  49--70 (2000)

\bibitem{raad2018survey}
Raad, L., Davy, A., Desolneux, A., Morel, J.M.: A survey of exemplar-based
  texture synthesis. Annals of Mathematical Sciences and Applications
  \textbf{3}(1),  89--148 (2018)

\bibitem{rabin2011wasserstein}
Rabin, J., Peyr{\'e}, G., Delon, J., Bernot, M.: Wasserstein barycenter and its
  application to texture mixing. In: International Conference on Scale Space
  and Variational Methods in Computer Vision. pp. 435--446. Springer (2011)

\bibitem{sajjadi2017enhancenet}
Sajjadi, M.S., Scholkopf, B., Hirsch, M.: Enhancenet: Single image
  super-resolution through automated texture synthesis. In: Proceedings of the
  IEEE International Conference on Computer Vision. pp. 4491--4500 (2017)

\bibitem{santambrogio2015ot}
Santambrogio, F.: Optimal transport for applied mathematicians. Progress in
  Nonlinear Differential Equations and their applications  \textbf{87} (2015)

\bibitem{seguy2018large}
Seguy, V., Damodaran, B.B., Flamary, R., Courty, N., Rolet, A., Blondel, M.:
  Large-scale optimal transport and mapping estimation. In: Proceedings of the
  International Conference in Learning Representations (2018)

\bibitem{shaham2019singan}
Shaham, T.R., Dekel, T., Michaeli, T.: Singan: Learning a generative model from
  a single natural image. In: Proceedings of the IEEE International Conference
  on Computer Vision. pp. 4570--4580 (2019)

\bibitem{VGG}
Simonyan, K., Zisserman, A.: Very deep convolutional networks for large-scale
  image recognition. In: Bengio, Y., LeCun, Y. (eds.) International Conference
  on Learning Representations (ICLR'15) (2015)

\bibitem{tartavel2016wasserstein}
Tartavel, G., Peyr{\'e}, G., Gousseau, Y.: Wasserstein loss for image synthesis
  and restoration. SIAM Journal on Imaging Sciences  \textbf{9}(4),  1726--1755
  (2016)

\bibitem{thiry2021unreasonable}
Thiry, L., Arbel, M., Belilovsky, E., Oyallon, E.: The unreasonable
  effectiveness of patches in deep convolutional kernels methods. In:
  International Conference on Learning Representation (2021)

\bibitem{ulyanov2016texture}
Ulyanov, D., Lebedev, V., Vedaldi, A., Lempitsky, V.: Texture networks:
  feed-forward synthesis of textures and stylized images. In: Proc. of the Int.
  Conf. on Machine Learning. vol.~48, pp. 1349--1357 (2016)

\bibitem{vacher2020texture}
Vacher, J., Davila, A., Kohn, A., Coen-Cagli, R.: Texture interpolation for
  probing visual perception. Advances in Neural Information Processing Systems
  (2020)

\bibitem{webster2018innovative}
Webster, R.: Innovative non-parametric texture synthesis via patch
  permutations. arXiv preprint arXiv:1801.04619  (2018)

\bibitem{xia2014synthesizing}
Xia, G.S., Ferradans, S., Peyr{\'e}, G., Aujol, J.F.: Synthesizing and mixing
  stationary gaussian texture models. SIAM Journal on Imaging Sciences
  \textbf{7}(1),  476--508 (2014)

\end{thebibliography}
\end{document}